\newtheorem{method}{Method}
\newtheorem{lemmaX}[theorem]{Lemma}
\newcommand{\figaddr}[1]{#1}
\newcommand{\Bem}[1]{}
\begin{document}

\title{How To Overcome Richness Axiom Fallacy}

\author{  Mieczys{\l}aw A. K{\l}opotek \Letter\inst{1}\orcidID{0000-0003-4685-7045}  
      \and \\
        Robert A. K{\l}opotek\inst{2}\orcidID{0000-0001-9783-4914}   
       }
\authorrunning{  M.A. K{\l}opotek  and  R.A. K{\l}opotek}
%
\institute{Institute of Computer Science, \\ Polish Academy of Sciences,
Warsaw, Poland \\
\email{mieczyslaw.klopotek@ipipan.waw.pl}\\
  \and
Faculty of Mathematics and Natural Sciences. School of Exact Sciences,\\
Cardinal Stefan  Wyszy{\'n}ski   University in Warsaw, Poland\\
\email{r.klopotek@uksw.edu.pl}
}

\maketitle

\begin{abstract}
The paper points at the grieving problems implied by the richness axiom in the Kleinberg's axiomatic system and suggests resolutions. 
The richness induces learnability problem in general and leads to conflicts with consistency axiom.  As a resolution, learnability constraints and usage of centric consistency or restriction of the domain of considered clusterings to super-ball-clusterings is proposed. 

\keywords{richness axiom \and consistency axiom \and centric consistency axiom \and 
clustering algorithms \and
learnability \and 
theoretical foundations  
.}
\end{abstract}

\section{Introduction}\label{sec:intro}

This is an extended version of ISMIS 2022 Conference paper  \cite{ISMIS:2022:richness}. 

Kleinberg \cite{Kleinberg:2002} introduced an axiomatic system for distance based clustering functions consisting of three axioms: \emph{scale-invariance} (same clustering should be obtained if all distances between objects are multiplied by the same positive number), \emph{consistency} (same clustering should be obtained, if distances within a cluster are decreased and distances between elements from distinct clusters are increased) and \emph{richness} (for any partition of the {data}set into non-empty subsets there should exist a set of distances between the {data}points so that the clustering function delivers this partition). 
We shall say that (1)
 decreasing of distances within a cluster and increasing distances between elements from distinct clusters is called 
\emph{consistency transformation}, and (2)  multiplying 
distances between objects by the same positive number  is called 
\emph{scale-invariance transformation}.

This set of axioms was controversial from the very beginning. Kleinberg proved himself that the entire axiomatic system is contradictory, and only pairs of these axioms are not contradictory. But also the axiom of consistency turned out to be controversial as it excludes the $k$-means algorithm from the family of clustering functions. It has been shown in \cite{MAKRAK:2020:fixdimcons}  that consistency is contradictory by itself if we consider fixed dimensional Euclidean spaces which are natural domain of $k$-means application. 

It seems to be disastrous for the domain of clustering algorithms if an axiomatic system consisting of ''natural axioms'' is self-contradictory. It renders the entire domain questionable. Therefore numerous efforts have been made to cure such a situation by proposing different axiom sets or modifying Kleinberg's one,  just to mention \cite{%
Ackerman:2010NIPS,%
Ackerman:2013,%
Ben-David:2005,%
Ben-David:2009,%
Hopcroft:2012,%
vanLaarhoven:2014,%
Meila:2005,%
Strazzeri:2021,%
Zadeh:2009%
} 
etc.  
Kleinberg himself introduced the concept of partition $\Gamma'$ being a refinement of a partition $\Gamma$, 
if for every set $C' \in \Gamma'$, there is a
set $C \in  \Gamma$ such that $C' \subseteq C$.
He  defines Refinement-Consistency, a relaxation of Consistency, to require that
if distance d' is an f (d)-transformation of d, then f(d') should be a refinement of f(d).
Though  there is no clustering function that satisfies Scale-{In}variance, Richness, and Refinement-Consistency, but if one defines Near-Richness as Richness without  the partition in which each element is in a separate cluster, then  there exist clustering functions f that satisfy Scale-{In}variance and
Refinement-Consistency, and Near-Richness (e.g.  single-linkage with the distance-($\alpha\delta$) stopping condition, where
$\delta=  min_{i,j} d(i, j)$ and $\alpha\ge 1$.  

In this paper, we look at more detail at the richness property and investigate its counter-intuitiveness. 
The richness induces learnability problem in general (see Sec.\ref{sec:learnability}) and leads to conflicts with consistency axiom  (see Sec.\ref{sec:consistency}). In the past, we applied embedding into Euclidean space to resolve some problems with consistency axiom, but it does not work for richness  (Sec.\ref{sec:euclidean}). Therefore, as a resolution, 
usage of centric consistency (Sec.\ref{sec:consistency}) or restriction of the domain of considered clusterings to super-ball-clusterings (Sec.\ref{sec:superball}) is proposed. 
\section{Richness and {Learn}ability} \label{sec:learnability}
 
 Following \cite{Kleinberg:2002} let us define: 
A \emph{partition} $\Gamma$ of the set of {data}points $S$ into $k$ partitions is the set $\Gamma=\{C_1,\dots,C_k\}$ such that $C_i\cap C_j=\emptyset $ for $i\ne j$, $C_i\ne \emptyset$ and $S=C_1\cup C_2\cup \dots \cup C_k$.
A \emph{clustering function} $f$ is a function assigning a partition $\Gamma$ to any  dataset $S$ with at least two {data}points.
That is given a clustering function operates on the dataset $X$ from which samples $S$ are taken, then  $f: 2^X\rightarrow 2^{2^X}$, where for any $S\subset X$, $card(S)\ge 2$, $f(S)$ is a partition of $S$.
A \emph{clustering quality function} $Q$ is a function assigning a real value to a partition. That is $Q: 2^{2^X}\rightarrow \mathbb{R}$, where $Q(\Gamma)$ is defined only if $Q$ is a partition of some set. 
 
Richness is a concept from the realm of clustering, that is unsupervised learning. Nonetheless, if we have an explicit clustering quality function $Q$, but  the  clustering function $f$ has to be deduced from it assuming that $Q(f(S))$ is optimal among all possible partitions of $S$, then we have to do with the typical supervised learning task. 
Hence we are transferred to the supervised learning domain. 
In this domain, a function is deemed {learn}able if it can be discovered in polynomial time of problem parameters.

\begin{theorem}
There exist rich clustering functions that cannot be learned in polynomial time.
\end{theorem}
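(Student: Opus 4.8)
The plan is to realize a rich clustering function as the optimizer of a clustering quality function and then show that this optimization problem is computationally intractable. Concretely, I would fix a quality function $Q$ and define $f(S)=\arg\max_{\Gamma}Q(\Gamma)$, where the maximum ranges over all partitions $\Gamma$ of $S$ and ties are broken by a fixed deterministic rule. Learning $f$ in the sense of the preceding paragraph then amounts to computing this $\arg\max$, so the theorem reduces to exhibiting a single $Q$ whose induced $f$ is rich yet whose optimal partition cannot be found in polynomial time.

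For the richness half I would choose $Q$ to reward intra-cluster proximity and inter-cluster separation --- for example a correlation-clustering-style score that, for fixed distances $d$, counts ``agreements'': close pairs placed together and far pairs placed apart. Given an arbitrary target partition $\Gamma^{*}$ of $S$, set $d(i,j)=\varepsilon$ whenever $i,j$ lie in the same block of $\Gamma^{*}$ and $d(i,j)=M$ for cross-block pairs, with $0<\varepsilon\ll M$. Under these distances $\Gamma^{*}$ achieves the maximal possible score (no disagreements) and is the unique optimizer, so $f(S)=\Gamma^{*}$. Since $\Gamma^{*}$ was arbitrary, $f$ is rich. Note that Kleinberg's framework only requires a symmetric positive dissimilarity, not a metric, which grants us the freedom to assign the $\varepsilon/M$ pattern above.

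For the hardness half I would keep the same $Q$ but feed it adversarial distances arising from a reduction of a known NP-hard partitioning problem --- correlation clustering or an equivalent graph-clustering objective is convenient here because the number of clusters is not fixed, matching the unrestricted partitions demanded by richness. Encoding ``similar''/``dissimilar'' edge labels as small/large distances turns an optimal clustering of $S$ into an optimal solution of the NP-hard instance; hence computing $f(S)$ is NP-hard, and no polynomial-time learner exists unless $\mathrm{P}=\mathrm{NP}$. As a sanity baseline one should also observe that the naive optimizer is hopeless on its own: the number of partitions of an $n$-point set is the Bell number $B_n$, which grows faster than any polynomial (indeed super-exponentially), so brute-force enumeration is already non-polynomial.

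The main obstacle I anticipate is reconciling the two halves within one fixed quality function: richness demands that \emph{every} partition be the unique optimum for \emph{some} distance configuration, while hardness demands that for \emph{some} configurations the optimum be intractable to locate, and both must hold for the same $Q$. The correlation-clustering score is attractive precisely because it meets both requirements, but care is needed to verify uniqueness of the optimizer under the $\varepsilon/M$ distances (so that $f$ is well defined without appeal to tie-breaking) and to ensure the hardness reduction produces admissible dissimilarities. A secondary, expository point is the computational model: if one insists on an \emph{unconditional} lower bound rather than one resting on $\mathrm{P}\neq\mathrm{NP}$, I would instead treat $Q$ as a black box and run an adversary argument, forcing any algorithm to probe $\Omega(B_n)$ partitions before it can identify the optimum.
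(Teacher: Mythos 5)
Your proposal is correct in substance but takes a genuinely different route from the paper. The paper does not go through an optimization objective at all: it directly constructs an explicit function $f$ that computes the quotient $q$ of the smallest to the largest pairwise distance, fixes an enumeration $m$ of all $M+1$ partitions of $S$ (obtained by ranking partitions according to average intra-cluster distance), and returns the partition whose rank corresponds to the subinterval of $(0,1]$ containing $q$. Richness follows because $q$ can be driven to any value in $(0,1]$ by stretching a single pairwise distance, and the hardness claim is that merely evaluating $f$ forces one to rank all $B_n$ partitions, so no polynomial-time evaluation is possible; this is unconditional in spirit but is asserted rather than established as a formal lower bound (nothing rules out a shortcut to the partition of a given rank). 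Your route instead realizes $f$ as the arg-max of a correlation-clustering-style quality function, proves richness via the $\varepsilon/M$ distance pattern making any target partition the unique optimum, and proves hardness by reduction from an NP-hard partitioning problem. What you gain is a rigorous, checkable hardness argument for a natural objective that matches the paper's own framing of learnability as optimizing a quality function; what you give up is unconditionality (your main argument rests on $\mathrm{P}\neq\mathrm{NP}$, which you correctly flag, and your fallback adversary argument requires switching to a black-box model, whereas the theorem as stated concerns a concrete, fully specified $Q$). Two details you rightly identify as needing care: the agreement count needs a fixed threshold separating $\varepsilon$ from $M$ (which costs you scale-invariance, though the theorem does not demand it, while the paper's quotient-based construction keeps it), and the reduction must produce admissible symmetric positive dissimilarities, which Kleinberg's framework permits. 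Both proofs establish the theorem as an existence claim; yours is the more standard complexity-theoretic argument, the paper's the more self-contained combinatorial one.
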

\begin{proof}
The proof is by construction of such a function. 
Assume we have a set $S$ of cardinality $n$ with a distance function $d$ (assume "distances" can be defined as Kleinberg assumes, being greater than zero for distinct points, symmetric  and arbitrary otherwise). 
Let $G$ be the set of all possible partitions of $S$. Let $m: G\rightarrow \mathbb{N}\cup \{0\}$ be a function assigning each element of $G$ a unique consecutive integer starting from 0. 
This mapping is created as follows.
Let the  number of all possible partitions of $S$ into nonempty sets be $M+1$.
Assume $P,R$ is a pair of points with the maximal distance within $d$.  For each possible clustering $C_j$, $j=0,...,M$ (possible partition into nonempty sets)  we compute the average distance of {data}points within clusters, but neglecting the distance between the two points P,R, if they happen to fall into the same cluster. We sort $C_1,\dots C_m$  on them assigning 0 to the 
cluster with the lowest average distance and so on. On ties, for the clusterings affected, we try averages of squares of distances an so on. If no resolution possible, arbitrary decision is made, based e.g. on alphabetic ordering of clusters, cluster sizes etc.  
Create a function $h$ assigning each integer $i\in  [0,M]$ an interval $(i/M,(i+1)/M]$  
Imagine the following \textit{clustering function} $f$.
It computes the quotient $q$ of the smallest distance between data points to the highest one, that is between P and R. 
Then the clustering function $f(S,d)=m^{-1}(h^{-1}(q))$.
The richness of $f$ is easy to achieve. In case of equidistant points in a set $S$, $q$ amounts to 1. By increasing the distance $d(P,Q)$ alone (permitted by Kleinberg's definition), $q$ can take any value  from  (0,1] interval. So there exists always a set $S$ and distance function $d$ such that $f$ returns the desired clustering. 
So the clustering function has the richness property. Because of relying on quotients $q$, it is also scale-invariant. 
Though the function is simple in principle (and useless also), and meets axioms of richness and scale -invariance, we have a practical problem:
As no other limitations are imposed, one has to check 
up to 
$ M+1= \sum_{k=2}^n
\frac{1}{k!}\sum_{j=1}^{k}(-1)^{k-j}\Big(\begin{array}{l}k\\j\end{array}\Big)j^n \label{CLU:eq-liczebnosc} 
$
possible partitions (Bell number) in order to verify which one of them is the best for a given distance function 
because there must exist at least one distance function suitable for each of them. 
This 
cannot be done in reasonable time even if computation of $q$  is polynomial (even linear) in the dimensions of the task ($n$). 
\end{proof}

The idea of learnability is based on the assumption that the learned concept may be verified against new data. For example, if one learned $k$-means clustering on a data sample, then with newly incoming elements the clustering should not change (too much). 
With the richness requirement, if you learnt the model from a sample of size $n$, then it consists of no more than $n$ clusters. But there exists a sample of size $2n$ such that it defies the learned structure, because it has more than $n$ clusters.  That is the clustering structure of the domain  is not {learn}able by definition of richness.

Furthermore, most algorithms of cluster analysis are constructed in an incremental way.
But this can be useless if the clustering quality function is designed in a very unfriendly way, for example based on modulo computation. 
\begin{theorem}
There exist rich clustering functions that cannot be learnt incrementally
\end{theorem}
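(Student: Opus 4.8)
The plan is to imitate the construction in the proof of the previous theorem, retaining the enumeration $m$ of all partitions of the $n$-point set and the scale-invariant quotient $q$, but replacing the smooth selector by one that is maximally sensitive to the arrival of a single new point. I would keep $m\colon G\to\{0,\dots,M\}$ and the quotient $q\in(0,1]$ exactly as before, so that richness and scale-invariance are inherited verbatim: in the equidistant case $q=1$, and by inflating the single distance $d(P,R)$ the value of $q$ sweeps the whole interval $(0,1]$, so every partition of an $n$-point set stays reachable. The only modification is in how $q$ is turned into an index into $m$.

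First I would introduce an auxiliary integer $w(n)$ depending only on the sample cardinality --- a deliberately unfriendly, e.g.\ modular, function of $n$ --- and define the selected index as $\big(h^{-1}(q)+w(n)\big)\bmod (M+1)$, setting $f(S,d)=m^{-1}$ of that residue. Since $w$ is constant once the sample size is fixed, the added offset induces a bijection of $\{0,\dots,M\}$, so the richness and scale-invariance arguments survive unchanged. The structural fact I would then exploit is that the enumeration is attached to the sample itself: for an $(n+1)$-point sample the partition set $G$ is different, the Bell number $M+1$ is replaced by a completely different value $M'+1$, the offset becomes $w(n+1)$, and $m$ is rebuilt from scratch. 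Hence the residue computed for $S\cup\{x\}$ is taken modulo a new modulus against a new enumeration, and the partition it names bears no recoverable relation to the one named for $S$.

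With this in place the two halves of the argument are short. Richness follows from the $q$-sweep, once one notes that composing with a fixed offset modulo $M+1$ leaves the attainable set of partitions unchanged. For the failure of incremental learning I would argue that any incremental learner, after committing to $f(S)$, gains nothing toward $f(S\cup\{x\})$: because of the modular wrap and the wholesale change of enumeration the output jumps to an essentially arbitrary partition, so there is no update rule of bounded size --- no fixed number of merges, splits or point reassignments --- that carries $f(S)$ to $f(S\cup\{x\})$. Determining the latter forces a fresh examination of all $M'+1$ partitions of the enlarged set, which is precisely the obstruction of the previous theorem re-expressed in the incremental setting.

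The hard part will be pinning down what ``learnt incrementally'' is permitted to mean, since the claim is vacuous unless incrementality carries a precise stability or bounded-update semantics; I would fix it as the existence of an update operator whose cost is independent of $M$, equivalently a uniform bound on the edit distance between $f(S)$ and $f(S\cup\{x\})$, and then exhibit the modulo construction violating it for infinitely many $n$. A secondary point to verify is that $f$ remains a genuine single-valued clustering function on every admissible $S$ --- in particular that the tie-breaking inherited from the earlier construction still yields a well-defined $m$, and that $w$ is chosen computable so that $f$ is not disqualified for trivial reasons.
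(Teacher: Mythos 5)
Your construction is in the same spirit as the paper's --- both arguments defeat incrementality by routing the choice of partition through a modular-arithmetic device that destroys any relation between the output on $S$ and on $S\cup\{x\}$ --- but the concrete route differs. The paper does not reuse the enumeration-based function from the first theorem at all; it instead defines an explicit clustering \emph{quality} function $Q_s(\Gamma)=p-(q\bmod p)$, where $q$ is a sum of rounded normalized point-to-cluster-center distances and $p$ is a prime, and then supports the non-incrementality claim empirically: it optimizes $Q_s$ over all partitions of the first $n$ points of a small concrete dataset and tabulates the winners, showing that the optimal partition for $n$ points shares essentially no structure with the one for $n+1$ points. What the paper's approach buys is a concrete witness that the claimed phenomenon actually occurs; what your approach buys is that richness and scale-invariance come for free from the previous theorem, since composing $h^{-1}(q)$ with a fixed bijective offset modulo $M+1$ preserves surjectivity onto the set of partitions. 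The one place where your version is weaker than it should be is the decisive step: you \emph{assert} that ``there is no update rule of bounded size'' carrying $f(S)$ to $f(S\cup\{x\})$, on the grounds that the enumeration $m$ and the offset $w$ are rebuilt for the new sample size, but you never exhibit even a single family of instances where the edit distance between successive outputs is actually large. Since the enumeration is a deterministic sort by average intra-cluster distance, adjacent sample sizes could in principle yield correlated orderings, so this needs either a worked example (as the paper provides) or an explicit choice of $w$ together with a computation showing the outputs diverge for infinitely many $n$. You correctly identify the other soft spot yourself: neither you nor the paper formalizes ``learnt incrementally,'' and your proposed bounded-edit-distance semantics is a reasonable --- indeed more precise --- way to fill that hole than the paper's informal ``gives no hint.''
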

\begin{proof}
The proof is by construction of such a function. 
Consider the following clustering quality function $Q_s(\Gamma)$. 
For each data{}point $i\in S$ we compute its distance $d_i$ to its cluster center under $\Gamma$. Let $dmx= \max_{i\in S }d_i$. $q=\sum_{i\in S} round(10000d_i/dmx)$. Then $Q_s(\Gamma)=p - q \mod p$, where $p$ is a prime number (here 3041). 
This $Q_s$ was applied in order to  partition first $n$ points from sample data from Table \ref{tab:richnessdata}, as illustrated in Table 
\ref{tab:bestpartitionrichnessdata}.
 It turns out that the best partition 
for $n$ points does not give any hint for the best partition for $n+1$ points 
therefore each possible partition needs to be investigated in order to find the best one.%
\footnote{
Strict separation   \cite{Blum:2009} mentioned earlier is another kind of 
a weird cluster quality function, requiring visits to all the partitions 
}

\begin{table}
\centering
\caption{Data points to be clustered using a ridiculous clustering quality function  }
\label{tab:richnessdata}
\begin{tabular}{lll}
\hline 
id & $x$ coordinate  & $y$ coordinate   \\
\hline 
 1& 4.022346 &5.142886	\\
  2& 3.745942& 4.646777	\\
  3& 4.442992& 5.164956	\\
  4& 3.616975& 5.188107	\\
  5& 3.807503& 5.010183	\\
  6& 4.169602& 4.874328	\\
  7& 3.557578& 5.248182	\\
  8& 3.876208& 4.507264	\\
  9& 4.102748& 5.073515	\\
10& 3.895329& 4.878176\\ 
\hline 
\end{tabular}
\end{table}

\begin{table}
\centering
\caption{Partition of the best quality 
(the lower the value the better)
after including $n$ first points from Table \ref{tab:richnessdata}.   }
\label{tab:bestpartitionrichnessdata}
\begin{tabular}{llp{5.5cm}}
\hline
$n$ & quality & partition \\
\hline 
  2&1270  & \{ 1, 2 \}\\                                  
  3&1270  & \{ 1, 2 \}  \{ 3 \}\\                           
  4&823   &\{ 1, 3, 4 \}  \{ 2 \}\\                         
  5&315   &\{ 1, 4 \}  \{ 2, 3, 5 \}\\                      
  6&13   &\{ 1, 5 \}  \{ 2, 4, 6 \}  \{ 3 \}\\                
  7&3   &\{ 1, 6 \}  \{ 2, 7 \}  \{ 3, 5 \}  \{ 4 \}\\          
  8&2   &\{ 1, 2, 4, 5, 6, 8 \}  \{ 3 \}  \{ 7 \}\\           
  9&1   &\{ 1, 2, 4, 5 \}  \{ 3, 8 \}  \{ 6, 9 \}  \{ 7 \}\\    
10&1   &\{ 1, 2, 3, 5, 9 \}  \{ 4, 6 \}  \{ 7, 10 \}  \{ 8 \}\\
\end{tabular}
\end{table}

  \end{proof}

Under these circumstances let us point at the implications of the so-called  
 learnability theory \cite{Valiant:1984},\cite{Fulop:2013} for the richness axiom. 
On the one hand the hypothesis space is too big for learning 
a clustering from a sample. 
On the other hand an exhaustive search in this space is prohibitive so that some theoretical clustering functions do not make practical sense.

Furthermore, if the clustering function can fit any data, we are practically unable to learn any structure of data space from data \cite{MAK0:1991}. And this learning capability is necessary at least in the cases: either when  
the data may be only representatives of a larger population 
or the distances are measured with some measurement error (either systematic or random) or both. 
We speak about a much broader aspect than 
cluster stability or cluster validity,   pointed at by Luxburg \cite{Luxburg:2011,Luxburg:2009}. 

In the special case of $k$-means, the reliable estimation of cluster center position and of the variance in the cluster plays a significant role. 
But there is no reliability for cluster center if a cluster consists of fewer than 2 elements, and for variance the minimal cardinality is 3. 

 
\section{Richness Problems in Euclidean Space}  \label{sec:euclidean}
%
We have shown \cite{MAKRAK:2020:fixdimcons} that some problems with consistency axiom can be resolved by embedding into fixed-dimensional space\footnote{For example, $k$-means has the property of consistency in one dimensional space, though it does not have this property in general. }. But we show here that this is not the case for richness.  
\begin{figure}
\centering
(a)\includegraphics[width=0.45\textwidth]{\figaddr{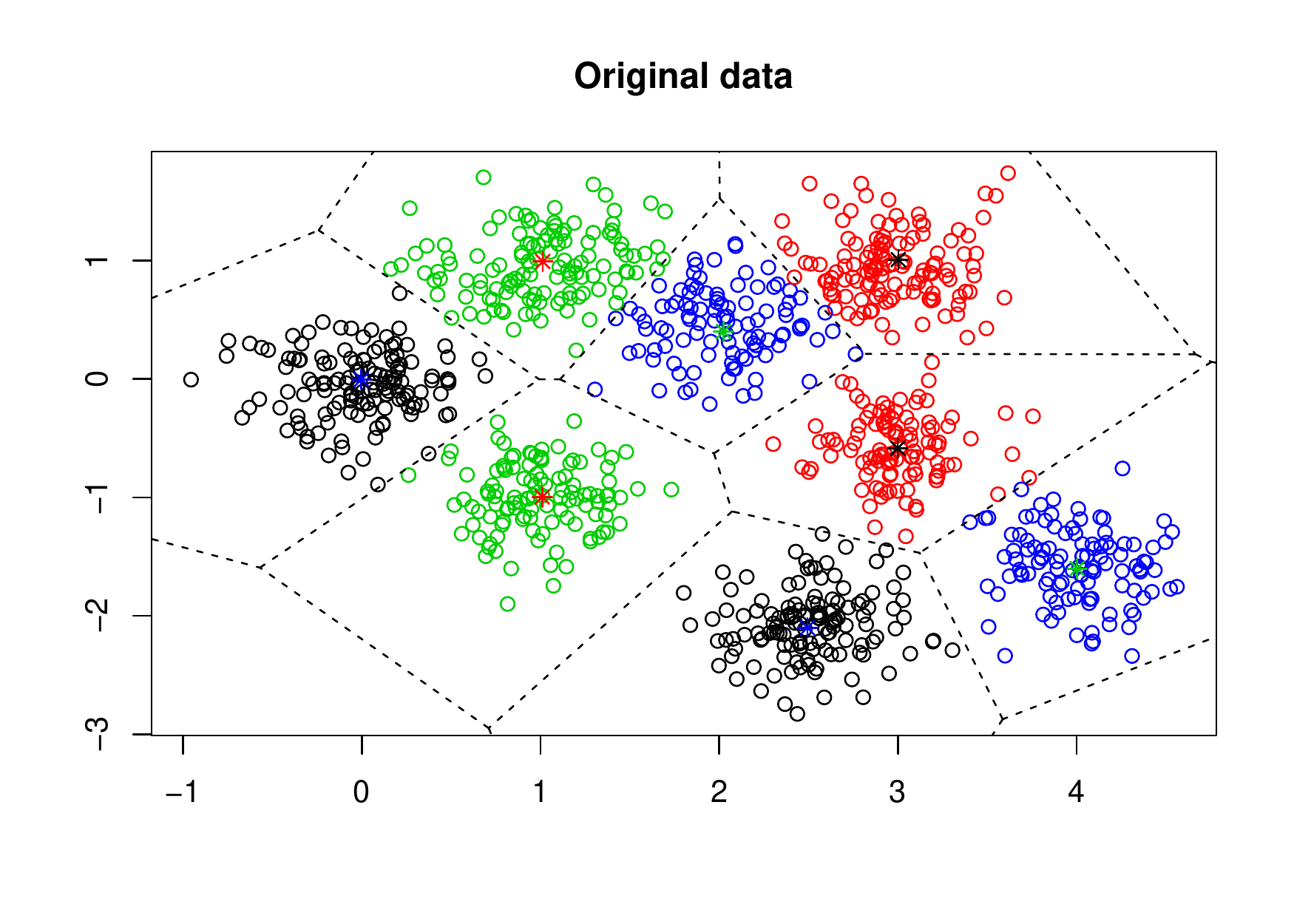}}  %
(b)\includegraphics[width=0.45\textwidth]{\figaddr{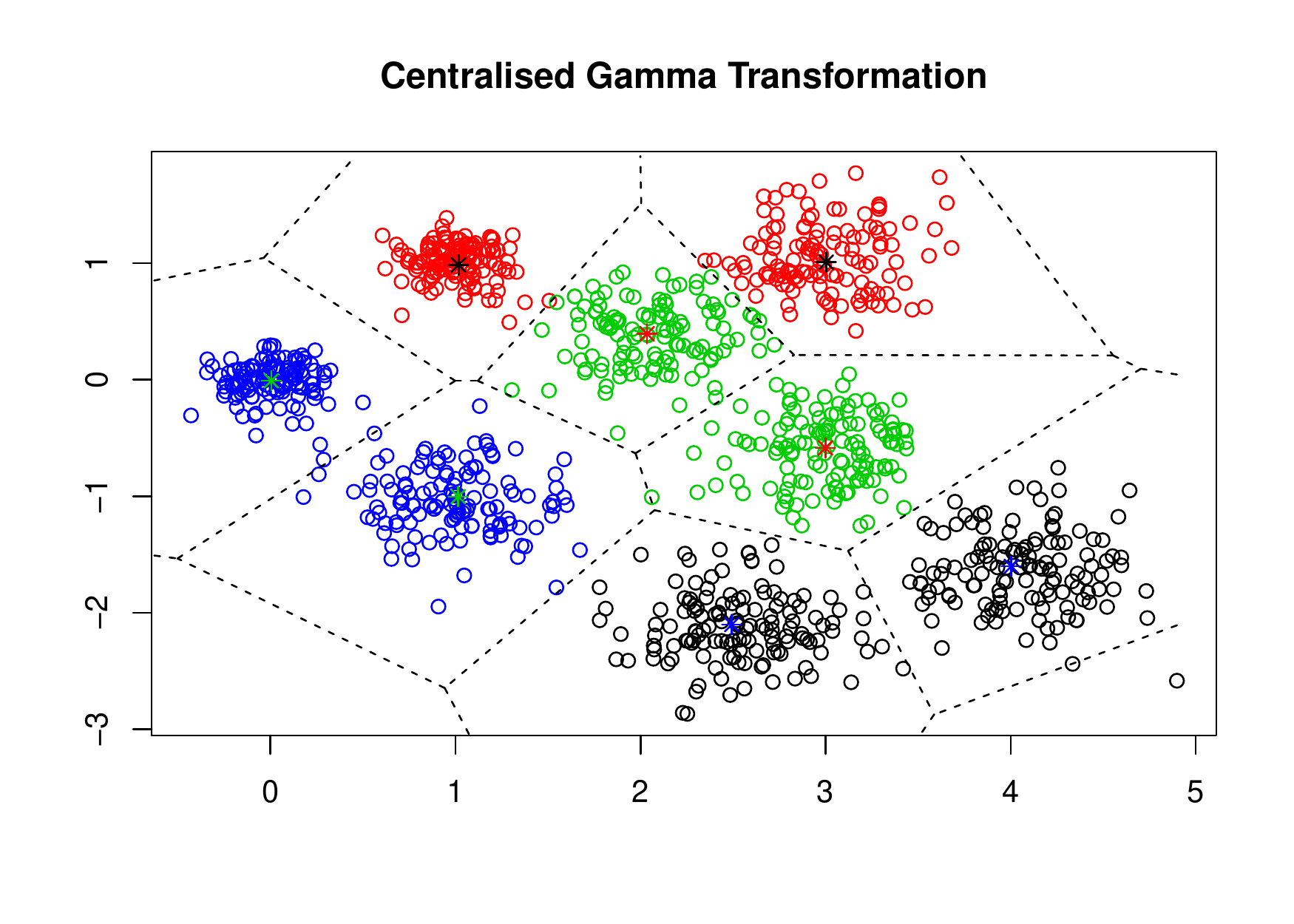}}  %
\caption{(a) A mixture of 8 normal distributions as clustered by $k$-means algorithm (Voronoi diagram superimposed).  
(b) Same data 
after a centric consistency transformation,
 clustered by $k$-means algorithm  into 8 groups
}\label{fig:8clusters}
\label{fig:8clustersKlopotek}
\end{figure}

While consistency transform turns out to be too restrictive in finite dimensional space \cite{MAKRAK:2020:fixdimcons}, the richness is problematic the other way, mainly from the point of view of learnability theory, but also leads to the chaining effect \cite{Kleinberg:2002}.  
 
\Bem{Richness or near-richness forces the introduction of \textit{refinement-consistency} which is a too weak concept. 
But even if we allow for such a resolution of the contradiction in Kleinberg's framework, it still does not make it suitable for practical purposes. 
The most serious drawback of Kleinberg's axioms is the richness requirement.}

Let us ask whether or not it is possible 
to have richness, that is for any partition there exists always a distance function that the clustering function will return this partition, and yet
if we restrict ourselves to $\mathbb{R}^m$, the very same clustering function is not rich any more, or even it is not chaining.

Consider the following clustering function $f()$. 
If it takes a distance function $d()$ that takes on only two distinct values $d_1$ and $d_2$ such that $d_1<0.5 d_2$ and for any three data points $a,b,c$ if $d(a,b)=d_1, d(b,c)=d_1$ then $d(a,c)=d_1$, it creates clusters of points in such a way that $a,b$ belong to the same cluster iff $d(a,b)=d_1$, and otherwise they belong to distinct clusters. If on the other hand $f()$ takes a distance function not exhibiting this property, it works like $k$-means. Obviously, function $f()$ is rich, but at the same time, if confined to $\mathbb{R}^m$, if $n>m+1$ and $k\ll n$, then it is not rich -- it is in fact $k$-rich, and hence not chaining.

Can we get around the problems of  all three Kleinberg's axioms in a similar way in $\mathbb{R}^m$?
Regrettably,

\begin{theorem}{} \label{thm:KleinbergImpossibilityInRm}
If $\Gamma$ is a partition of $n>2$ elements returned by 
a clustering function $f$ under some distance function $d$ (any distance function  allowed by Kleinberg in \cite{Kleinberg:2002}),
and  
 $f$
 satisfies 
consistency, 
then there exists a distance function $d_E$ embedded in 
$\mathbb{R}^m$ for the same set of elements such that 
$\Gamma$ is the partition of this set under $d_E$.
\end{theorem}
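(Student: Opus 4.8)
The plan is to use the consistency axiom to replace the arbitrary Kleinberg distance $d$ by a highly symmetric two-valued distance that preserves the partition $\Gamma=f(d)$, and then to exhibit an explicit point configuration in Euclidean space realizing exactly that two-valued distance.

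First I would fix $\Gamma=f(d)=\{C_1,\dots,C_k\}$ and define a single target distance $d'$ on $S$ by $d'(i,j)=a$ whenever $i,j$ lie in the same cluster and $d'(i,j)=b$ whenever they lie in different clusters, where $0<a<b$ are constants to be chosen. To guarantee that passing from $d$ to $d'$ is a legitimate consistency transformation relative to $\Gamma$, I need $a$ to be at most every within-cluster distance of $d$ and $b$ to be at least every between-cluster distance of $d$. Taking $a=\min\{d(i,j): i,j \text{ in the same cluster}\}$ and $b=\max\{\max\{d(i,j): i,j \text{ in different clusters}\},\,2a\}$ achieves both (with the convention that an empty $\min$ or $\max$ is replaced by a free positive constant, as happens for the all-singletons or the single-cluster partition), and additionally forces $a<b$ regardless of how the original within- and between-cluster distances compare. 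Since $d'$ only shrinks within-cluster distances and only enlarges between-cluster distances, and is still positive on distinct points and symmetric, the consistency axiom yields $f(d')=f(d)=\Gamma$.

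Second, I would realize $d'$ by points in $\mathbb{R}^m$. Indexing $S=\{1,\dots,n\}$ and the clusters by $1,\dots,k$, place each point $p$ at $x_p=s\,e_p+t\,g_{c(p)}\in\mathbb{R}^{n+k}$, where $e_1,\dots,e_n$ and $g_1,\dots,g_k$ are two mutually orthogonal families of orthonormal vectors, $c(p)$ is the cluster index of $p$, and $s=a/\sqrt{2}$, $t=\sqrt{(b^2-a^2)/2}$ (well-defined since $b>a$). A direct computation gives $\|x_p-x_q\|=s\sqrt{2}=a$ for $p,q$ in the same cluster (the $g$-parts cancel) and $\|x_p-x_q\|=\sqrt{2s^2+2t^2}=\sqrt{a^2+(b^2-a^2)}=b$ for $p,q$ in different clusters (the $e$- and $g$-parts are orthogonal). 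Hence the induced Euclidean distance $d_E$ coincides with $d'$ as a function on pairs, so $f(d_E)=f(d')=\Gamma$, with $m=n+k$, proving the claim.

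The bulk of the argument is routine once the right reduction is identified; the only point requiring care is the choice of the constants $a$ and $b$. The potential obstacle is that for a general Kleinberg distance the smallest within-cluster distance may exceed the largest between-cluster distance, so naively collapsing to the minimum and inflating to the maximum need not produce $a<b$. This is precisely why I would pad $b$ up to $2a$ when necessary, which is still a valid increase of every between-cluster distance and secures the strict inequality $a<b$ that the embedding requires. Everything else---the legality of the consistency transformation and the orthogonal-block construction---follows by inspection.
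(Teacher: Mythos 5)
Your overall strategy coincides with the paper's: use the consistency axiom to replace $d$ by a well-separated distance that still yields $\Gamma$, then realize that distance by an explicit Euclidean point configuration. Your consistency step is sound (shrinking every within-cluster distance to $a=\min$ and inflating every between-cluster distance to $b\ge\max$, with the padding $b\ge 2a$ to force $a<b$, is a legitimate consistency transformation, and the degenerate cases are handled), and your orthogonal-block computation correctly realizes the two-valued distance.

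The genuine problem is the dimension. The theorem sits in a section whose entire point is that Kleinberg's impossibility constructions transfer to a \emph{fixed} ambient dimension $m$; the surrounding discussion explicitly considers $n>m+1$, i.e., far more points than dimensions, and the paper's follow-up remark is that richness plus consistency forces richness ``in $\mathbb{R}^m$.'' Your realization makes every within-cluster distance exactly equal to $a$, i.e., each cluster is a regular simplex, which already requires ambient dimension at least $|C_i|-1$, and your explicit embedding lives in $\mathbb{R}^{n+k}$. So you prove the claim only for a dimension growing with $n$, which drains the theorem of its intended force. The paper avoids this by not collapsing to a two-valued metric: each cluster $C_i$ is merely packed into a ball of radius $r_i=\frac12\min_{x,y\in C_i,x\ne y}d(x,y)$ (points placed at \emph{arbitrary distinct} locations inside it, possible even in $\mathbb{R}^1$), and the balls are pushed pairwise further than $dmax$ apart; this is still a consistency transformation and works for every fixed $m\ge 1$. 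Your argument is repaired by the same relaxation: drop the requirement $d'(i,j)=a$ exactly and only require $d'(i,j)\le a$ within clusters, which the ball packing delivers in any dimension.
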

Theorem \ref{thm:KleinbergImpossibilityInRm}
implies that the constructs of contradiction of Kleinberg axioms are simply transposed from the domain of any distance functions to distance functions in $\mathbb{R}^m$. 

\begin{proof}
To show the validity of the theorem, we will construct the appropriate distance function 
$d_E$ by embedding in the $\mathbb{R}^m$.
Let $dmax$ be the maximum distance between the considered elements under $d$. 
Let $C_1,\dots,C_k$ be all the clusters contained in $\Gamma$. 
For each cluster $C_i$ we construct a ball $B_i$ in $\mathbb{R}^m$ with radius $r_i$ equal to 
$r_i=\frac12 \min_{x,y\in C_i, x\ne y} d(x,y)$.
The ball $B_1$ will be located in the origin of the coordinate system.
$B_{1,\dots,i}$  be the ball in $\mathbb{R}^m$ containing all the balls $B_1,\dots,B_i$.
Its center be at $c_{1,\dots,i}$ and radius $r_{1,\dots,i}$. 
The ball $B_{i}$ will be located on the surface of the ball 
with center at $c_{1,\dots,i-1}$ and radius $r_{1\dots,i-1}+dmax+r_{i}$.
For each $i=1,\dots,k$, place elements of $C_i$  at distinct locations  within the ball $B_i$. 
Define the distance function $d_E$  as the Euclidean distances within $\mathbb{R}^m$  in these constructed locations. 

$d_E$ is a consistency-transform of $d$, as distances between 
elements of $C_i$ are smaller than  or equal to  $2 r_{i}=\min_{x,y\in C_i, x\ne y} d(x,y)$, 
and the distances between elements of different balls exceed 
$dmax$. 
\end{proof}

This means that if $f$ is rich and consistent, it is also rich in $\mathbb{R}^m$.


\section{Disturbing Effects of Consistency Axiom on Richness Axiom}\label{sec:consistency}

Let us recall practical problems with consistency,
pointed at by  Ben-David \cite{Ben-David:2009}: emergence of impression of a different clustering of data (with a different number of clusters) as they show in their Fig.1.
This may imply problems if we  apply $k$-means algorithm with $k$ varying over a range of values  to identify most appropriate value of $k$, e.g. by  picking  $k$, for which 90\% of variance is variance explained or there is abrupt break (saturation) in increase of relative variance explained.

   \begin{figure}
\centering (a)
\includegraphics[width=0.45\textwidth]{\figaddr{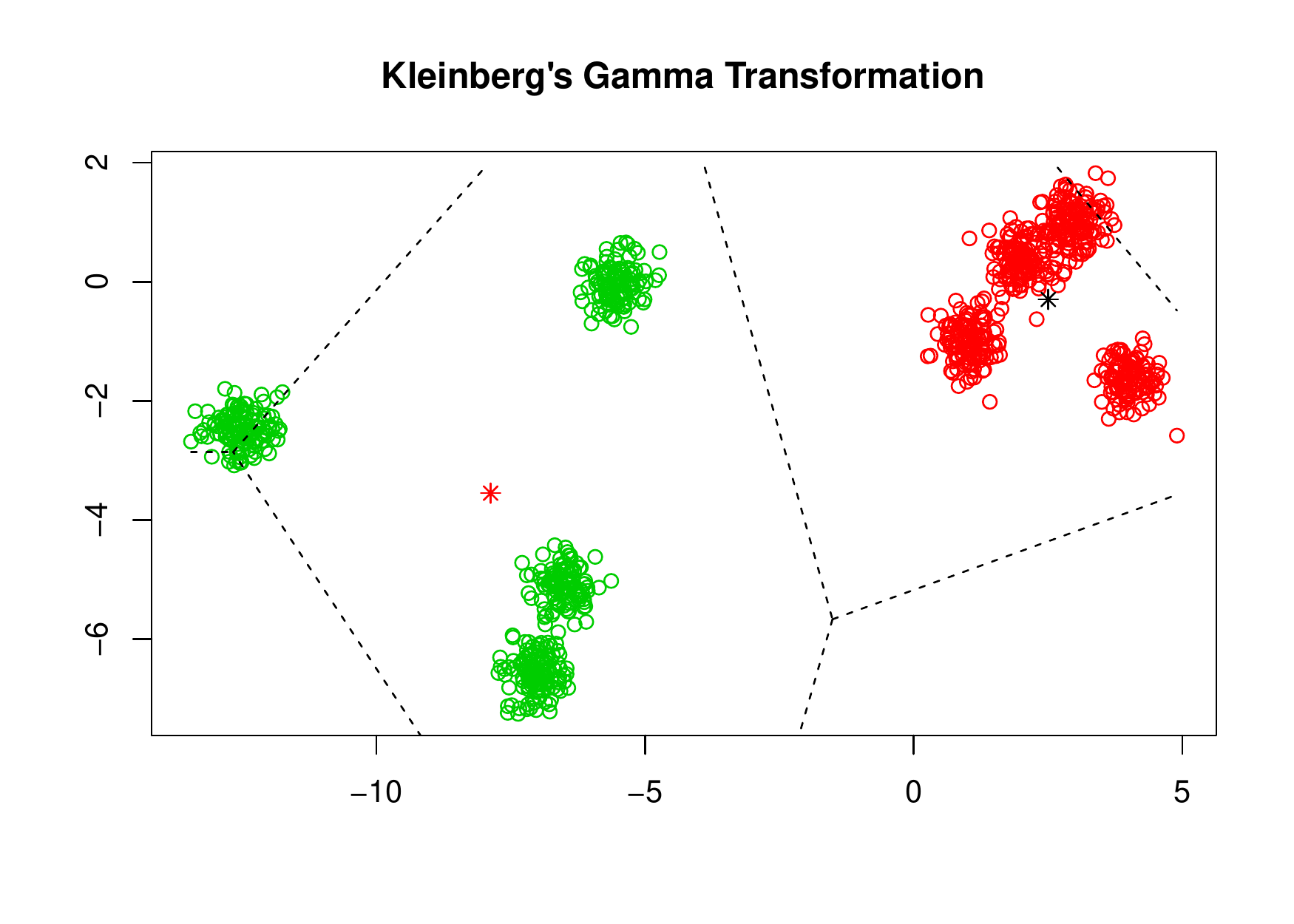}}  %
(b) %
\includegraphics[width=0.45\textwidth]{\figaddr{8clustersKleinbergGamma4gr.pdf}}  %
\caption{Data from 
Fig. \ref{fig:8clusters}(a) after Kleinberg's consistency-transformation 
 clustered by $k$-means algorithm  into (a) two   and  (b) four  groups. 
}\label{fig:8clustersKleinberg}
\end{figure}

The variance explained $Var_E(\Gamma_k)$ by the clustering $\Gamma_k$ of the dataset $S$ by $k$-means algorithm is defined as follows. 
$Var_E(\Gamma_k)=Var(S)-\sum_{c\in \Gamma_k} Var(c)$.
The percentage of variance explained $Var_{RE}(\Gamma_k)=Var_E(\Gamma_k)/Var(S)\cdot 100\%$.

We applied  this method of choosing $k$ for $k$-means (R implementation) to  a mixture of data points sampled from  8 normal distributions, shown in  
  Fig.\ref{fig:8clusters}(a). 
  Let us define: 
  $$RIV(\Gamma_b,\Gamma_n,\Gamma_a)=\frac{ Var_{RE}(\Gamma_{n})
- Var_{RE}(\Gamma_{b,})
}
{ Var_{RE}(\Gamma_{a})
- Var_{RE}(\Gamma_{n})
}$$ and  
$$RIV(k)=RIV(\Gamma_{k-1},\Gamma_k,\Gamma_{k+1})$$
As  visible in Fig.\ref{fig:8clusters}(a) and confirmed by  the   column ``Original'' of Table \ref{tab:comparisonVariance}, representing the percentage of variance explained ($Var_{RE}$), 
 the $k$-means algorithm with $k=8$, separates best the points from various distributions. 
At about 7-8 clusters we get saturation, 90\% explained variance mark is crossed and with more than 8 clusters the relative increase of variance explained $RIV(k)$, having a pick at $k=8$, 
drops abruptly.

Fig. \ref{fig:8clustersKleinberg} illustrates a result of a consistency-transform on the results of the clustering from   Fig. \ref{fig:8clusters}(a) (outer consistency transform).
The ``RIV'' column next to ``Kleinberg'' column picks at $k=2$ and has a smaller pick at $k=4$
Visually  we have two clusters (Fig.\ref{fig:8clustersKleinberg}(a)). 
We could also classify this data set as having four clusters, as in  Fig.   \ref{fig:8clustersKleinberg}(b). 
This renders Kleinberg's consistency axiom counter-intuitive, independently of the choice of the particular clustering algorithm - the $k$-means. 
It  demonstrates also the weakness of outer-consistency concept. %
%
%
%
Kleinberg's consistency transforms may also create new structures, as illustrated in Fig. \ref{fig:ConsTrans}.

\begin{figure}
\centering (a)
\includegraphics[width=0.4\textwidth]{\figaddr{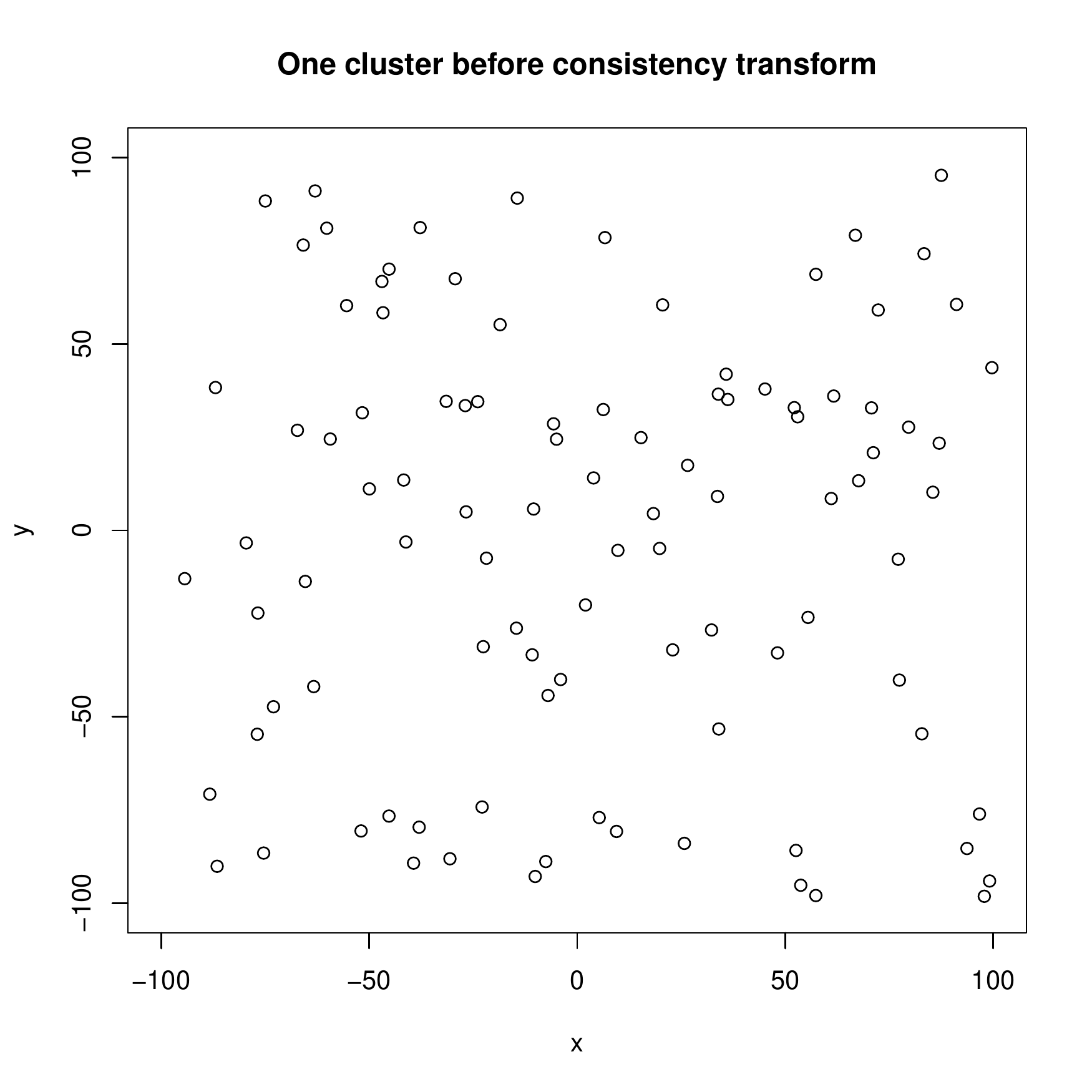}}  %
(b)
\includegraphics[width=0.4\textwidth]{\figaddr{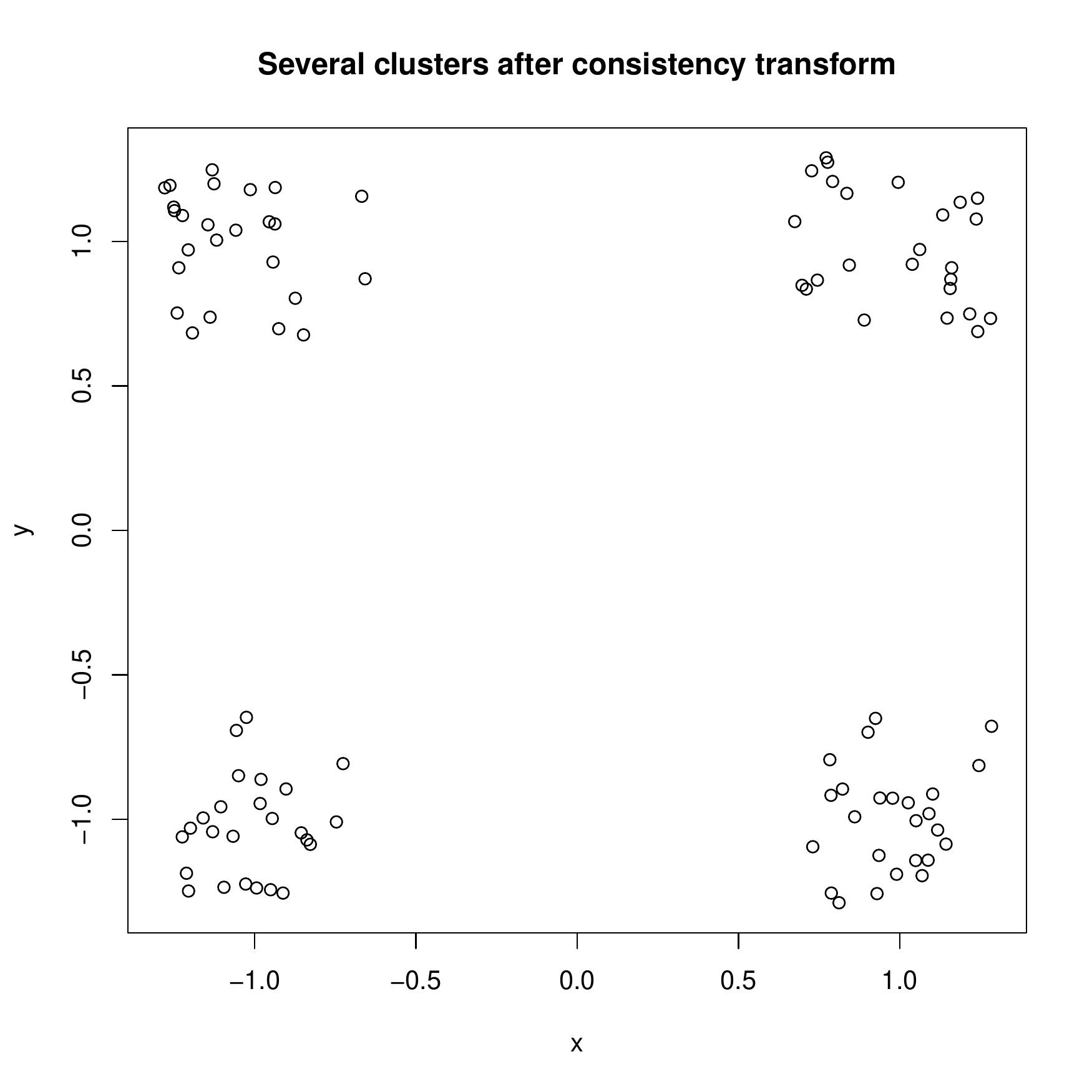}}  %
\caption{Problems with consistency in Euclidean space.  
(a) single cluster before consistency transform. 
(b)  four clusters emerging after consistency transform. 
}\label{fig:ConsTrans}
\end{figure}

The problems with Kleinberg's consistency transformation gave rise to a proposal of centric consistency transformation \cite{Klopotek:2022continuous}. 
The centric consistency transformation
is a linear transformation decreasing the distance of each cluster element  to the gravity center of the cluster by some factor $0<\lambda\le 1$.

\begin{table}
\centering
\caption{Variance explained (in percent)
when applying $k$-means algorithm with 
$k=2,\dots,10$ to data from Figures 
\ref{fig:8clusters} (a) (Original),  
\ref{fig:8clustersKleinberg} (Kleinberg),
   and 
\ref{fig:8clustersKlopotek}(b) (Centric)  }
\label{tab:comparisonVariance}
\begin{tabular}{|l|ll|ll|ll|}
\hline 
$k$ & Original &RIV & Kleinberg &RIV& Centric &RIV\\
\hline  
2  &  49.5  & 2.36 & 78.2  &\textbf{7.89}  &50.1 &2.36\\ 
3  &  70.7  & 2.78&  88.1  &1.25  &71.3 &2.78\\ 
4  &  78.1  & 1.43&  96    &4.38  & 78.9 &1.43\\ 
5  &  83.3  & 1.20&  97.8  &1.63  & 84.2 &1.20\\ 
6  &  87.6  & 1.10&  98.9  &2.75  & 88.6 &1.10\\ 
7  &  91.6  & 1.60&  99.3  &2.00  & 92.6 &1.60\\ 
8  &  94    & \textbf{8.33}&  99.5  &2.00  & 95.1 &\textbf{8.33}\\ 
9  &  94.3  & 1.50&  99.6  & &  95.4 &1.50\\ 
10  &  94.6  & &  99.6     & &  95.6 &\\ 
\hline 
\end{tabular}
\end{table}

\begin{theorem}
If the RIV  quotient is used for selection of $k$, then centric consistency preserves or strengthens RIV value for the pick $k$. 
\end{theorem}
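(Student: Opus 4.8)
The plan is to track exactly how the centric consistency transformation, taken with respect to the pick clustering $\Gamma_k$, rescales the within-cluster scatter of $\Gamma_k$ and of its two neighbours, and then to read off the effect on the numerator and denominator of $RIV(k)$ separately. I would write the transformation as $x\mapsto \lambda x+(1-\lambda)\mu_{c(x)}$, where $\mu_{c(x)}$ is the gravity centre of the $\Gamma_k$-cluster containing $x$ and $0<\lambda\le 1$. For any clustering $\Gamma_j$ let $W_j=\sum_{c\in\Gamma_j}Var(c)$ be the total within-cluster variance and $B_j=Var_E(\Gamma_j)=Var(S)-W_j$ the explained variance, so that $Var_{RE}(\Gamma_j)=B_j/Var(S)\cdot 100\%$.

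First I would establish the invariants of the map. Since the mean of each $\Gamma_k$-cluster is a fixed point, every cluster centre of $\Gamma_k$ and the global mean $\mu_S$ are preserved. Splitting each $\|x'-\mu_S\|^2$ into a radial and a between-centre part and using $\sum_{x\in c_i}(x-\mu_i)=0$ then yields
\[
Var(S')=Var(S)-(1-\lambda^2)W_k=Var(S)-\Delta,\qquad \Delta:=(1-\lambda^2)W_k\ge 0,
\]
and in particular $W_k'=\lambda^2 W_k$, hence $Var_{RE}'(\Gamma_k)=B_k/(Var(S)-\Delta)$.

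The core step is to propagate this to the neighbours, where the relationship of $\Gamma_{k\pm1}$ to $\Gamma_k$ makes the bookkeeping clean. For a refinement $\Gamma_{k+1}$ of $\Gamma_k$, each cluster lies inside a single $\Gamma_k$-cluster, so on it the map is a uniform $\lambda$-scaling about the fixed point $\mu_{c(x)}$, giving $W_{k+1}'=\lambda^2 W_{k+1}$. For a coarsening $\Gamma_{k-1}$ of $\Gamma_k$, each cluster is a union of whole $\Gamma_k$-clusters, so its mean is again preserved and the same split gives $W_{k-1}'=W_{k-1}-(1-\lambda^2)W_k=W_{k-1}-\Delta$. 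Substituting into $Var_{RE}'$ and forming the two differences of $RIV(k)$ produces
\begin{align*}
Var_{RE}'(\Gamma_k)-Var_{RE}'(\Gamma_{k-1}) &=\frac{B_k-B_{k-1}}{Var(S)-\Delta},\\
Var_{RE}'(\Gamma_{k+1})-Var_{RE}'(\Gamma_k) &=\frac{\lambda^2(W_k-W_{k+1})}{Var(S)-\Delta}.
\end{align*}
Because $\Delta\ge 0$, the numerator can only grow relative to its pre-transformation value $(B_k-B_{k-1})/Var(S)$, while the denominator shrinks exactly when $\lambda^2 Var(S)\le Var(S)-\Delta$, i.e.\ when $(1-\lambda^2)W_k\le (1-\lambda^2)Var(S)$, which holds since $W_k\le Var(S)$. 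Numerator up and positive denominator down forces the quotient $RIV(k)$ up, which is the claim.

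The step I expect to be the main obstacle is justifying the refinement/coarsening structure used above: consecutive $k$-means clusterings need not be nested, and the transformation is a genuine consistency transformation only for $\Gamma_k$, so re-running $k$-means on $S'$ could in principle return different partitions at $k\pm1$. If $\Gamma_{k+1}$ fails to refine $\Gamma_k$ (or $\Gamma_{k-1}$ to coarsen it), the cross term $\sum_{x\in D}(x-\mu_D)\cdot(\mu_{c(x)}-\bar\mu_D)$ survives and the clean identities break. I would handle this either by evaluating $RIV$ on the fixed, hierarchically nested partitions induced by $\Gamma_k$, so that the structure holds by construction, or by invoking that centric consistency leaves the pick clustering $\Gamma_k$ invariant and bounding the residual cross term; making the second route rigorous in full generality is the delicate part.
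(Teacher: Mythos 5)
Your core computation is the same as the paper's: both arguments track exactly how the centric transformation changes the explained variance and the total variance, note that the common denominator $Var(S')$ cancels (or moves in the right direction) in the ratio of differences, and conclude that the numerator of $RIV(k)$ can only grow while the (positive) denominator can only shrink. The differences are nonetheless instructive. The paper applies the transformation to a single cluster $c$ of $\Gamma_k$ and uses the identity $Var_E(\Gamma_{j,c,\lambda})=Var_E(\Gamma_j)-(1-\lambda^2)Var(c)$ whenever $c$ sits inside one cluster of $\Gamma_j$, so in the fully nested case it gets exact preservation of $RIV$; you contract all clusters of $\Gamma_k$ simultaneously and obtain the sharper identities $W'_{k+1}=\lambda^2 W_{k+1}$ for a refinement and $W'_{k-1}=W_{k-1}-\Delta$ for a coarsening, which make the denominator strictly decrease. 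Your treatment of the case where $\Gamma_{k+1}$ splits clusters of $\Gamma_k$ into nested pieces is in fact cleaner than the paper's, which routes that case through a correction term. The obstacle you flag at the end --- that consecutive $k$-means partitions need not be nested and that the transformed data may be re-partitioned differently --- is precisely where the paper's own proof turns informal: it posits corrections $\epsilon'\ge 0$, $\epsilon''\ge 0$ with the favourable signs, justified only by a condition that the angles at the centre of $c$ in the relevant cluster-centre triangles not exceed $90^{\circ}$, plus a heuristic limiting argument in which $c$ is contracted to a point and re-expanded. So your proposal is not weaker than the published argument on that point, only more candid about what remains to be made rigorous; the two resolutions you sketch (fixing a nested hierarchy on which to evaluate $RIV$, or bounding the surviving cross term) are reasonable ways to close the same gap the paper leaves.
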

\begin{proof}
The centric consistency impacts the clustering by $k$-means as follows:
Let $\Gamma$ be an original clustering, $c\in \Gamma_k$ be a cluster to which we apply centric consistency with $\lambda$ yielding a clustering $\Gamma_{k.c,\lambda}$.
$$Var_E(\Gamma_{k,c,\lambda})=Var_E(\Gamma_k)-(1-\lambda^2)Var(c)$$ while the variance of the entire set drops from $Var(S))$ to $Var(S)-(1-\lambda^2)Var(c)$. 
The relative variance is then 
$$ Var_{RE}(\Gamma_{k,c,\lambda})=\frac{
Var_E(\Gamma_k)-(1-\lambda^2)Var(c)
}{
Var(S)-(1-\lambda^2)Var(c)
}$$ 
Obviously,  $Var_{RE}(\Gamma_{k,c,\lambda})\ge Var_{RE}(\Gamma_k) $
\Bem{
Consider the quotient 
$$\frac{ Var_{RE}(\Gamma_{k,c,\lambda})}{  Var_{RE}(\Gamma_k) }
=\frac{Var(S))}{Var(S)-(1-\lambda^2)Var(c)} \frac{Var_E(\Gamma_k)-(1-\lambda^2)Var(c)}{Var_E(\Gamma_k)}
$$
}
Consider the quotient 
$$RIV(\Gamma_{k',c,\lambda},\Gamma_{k,c,\lambda},\Gamma_{k",c,\lambda})=\frac{ Var_{RE}(\Gamma_{k,c,\lambda})
- Var_{RE}(\Gamma_{k',c,\lambda})
}
{ Var_{RE}(\Gamma_{k",c,\lambda})
- Var_{RE}(\Gamma_{k,c,\lambda})
} $$ $$
= \frac{Var_E(\Gamma_{k})-(1-\lambda^2)Var(c)
-Var_E(\Gamma_{k'})+(1-\lambda^2)Var(c)
}{
Var_E(\Gamma_{k"})-(1-\lambda^2)Var(c)
-Var_E(\Gamma_{k})+(1-\lambda^2)Var(c)
}
$$
$$= \frac{Var_E(\Gamma_{k}) 
-Var_E(\Gamma_{k'}) 
}{
Var_E(\Gamma_{k"}) 
-Var_E(\Gamma_{k}) 
}
=RIV(\Gamma_{k'},\Gamma_{k},\Gamma_{k"})
$$
in case $k'<k<k"$ and $c\subseteq c' \in \Gamma_{k'}$ and $c\subseteq c" \in \Gamma_{k"}$. 
If, however, $c$ is not a subset of any set in $\Gamma_{k'}$ and/or $\Gamma_{k"}$, then the  variance reduction  in case of k' and/or k" clustering is not that big so that 
$$RIV(\Gamma_{k',c,\lambda},\Gamma_{k,c,\lambda},\Gamma_{k",c,\lambda})=\frac{ Var_{RE}(\Gamma_{k,c,\lambda})
- Var_{RE}(\Gamma_{k',c,\lambda})
}
{ Var_{RE}(\Gamma_{k",c,\lambda})
- Var_{RE}(\Gamma_{k,c,\lambda})
} $$ $$
= \frac{Var_E(\Gamma_{k})-(1-\lambda^2)Var(c)
-Var_E(\Gamma_{k'})+(1-\lambda^2)Var(c)+\epsilon'
}{
Var_E(\Gamma_{k"})-(1-\lambda^2)Var(c)-\epsilon"
-Var_E(\Gamma_{k})+(1-\lambda^2)Var(c)
}
$$
$$= \frac{Var_E(\Gamma_{k}) 
-Var_E(\Gamma_{k'}) +\epsilon'
}{
Var_E(\Gamma_{k"}) 
-Var_E(\Gamma_{k}) -\epsilon"
}
\ge  \frac{Var_E(\Gamma_{k}) 
-Var_E(\Gamma_{k'}) 
}{
Var_E(\Gamma_{k"}) 
-Var_E(\Gamma_{k}) 
}
=RIV(\Gamma_{k'},\Gamma_{k},\Gamma_{k"})
$$
where $\epsilon'\ge 0, \epsilon"\ge 0$.
This holds simply if, in cluster center triangles formed by cluster $c$ and clusters "splitting" it,  the angle at $c$ cluster center does not exceed $90^o$. 
Otherwise consider the case when the entire $c$ cluster was moved to a single point. Then, in some clustering $\Gamma_x$ one cluster contains it. If we spread the cluster again, and at some point the optimal clustering is $\Gamma_y$ such that $c$ is split between clusters, then by reversing the thinking shrinking $c$ must reduce the quality function of $\Gamma_y$ to be loweer than $\Gamma_x$. So we get the result. Consult also \cite{MAK:2022:kmeanspreserving}. 

\end{proof}

Richness usage  in combination with centric consistency may make sense in general, though  counterexamples can be constructed. 
In \cite{Klopotek:2022continuous} we introduced the concept of 
$k\downarrow$-near-richness. A clustering algorithm possesses this property if it can return any clustering except one with the number of  clusters over $k$. 
We have shown in \cite[Theorem 32]{Klopotek:2022continuous} that an axiomatic system consisting of centric consistency, scale-invariance and $k\downarrow$-nearly-richness is not contradictory.

\section{Super-Ball Separation}\label{sec:superball}
%
Consider a special case of dataset to be clustered. 
\begin{definition}
\Bem{Consider data with the following clusters: For each cluster $i$ there exists a radius $r_i$ such that all members of the cluster lie within $r_i>0$ of the cluster gravity center and for some $\epsilon>0$ at least one cluster element lies further than $\epsilon$ from gravity center. Furthermore the distance between the gravity center of cluster $i$ and $j$ amounts to at least $3r_i+_3r_j$. We will call such clusters super-ball-separated and the clustering itself super-ball-clustering. 
}
Consider data with the following clusters: For each cluster $i$ there exists a radius $r_i$ such that all members of the cluster lie within $r_i>0$ of any cluster member and at least two cluster elements have positive distance. Furthermore the distance between any convex hull point of cluster $i$ and $j$ amounts to at least $sr_i+sr_j$. We will call such clusters $s$-super-ball-separated and the clustering itself $s$-super-ball-clustering. 
\end{definition}
The $s$-super-ball-separation may be seen as a generalization of nice separation and perfect separation as well as their ball variants see e.g. \cite{RAK:MAK:2019:perfectball}.

\begin{theorem}
If the dataset $S$ has a 
$s$-super-ball-clustering $\Gamma_1$ into 
$k_1$ clusters, and one cluster $C\in \Gamma_1$  has a  $s$-super-ball-clustering $\Gamma_2$ into $k_2$ clusters, then this dataset $S$ has a 
$s$-super-ball-clustering $\Gamma$ into 
$k_1+k_2$ clusters of the form  $\Gamma=(\Gamma_1-\{C\})\cup \Gamma_2$
\end{theorem}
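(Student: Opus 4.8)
The plan is to verify directly that the family $\Gamma=(\Gamma_1-\{C\})\cup\Gamma_2$ meets the two requirements of an $s$-super-ball-clustering: a within-cluster radius bound for every block, and $s$-super-ball separation for every pair of distinct blocks. First I would confirm that $\Gamma$ is a partition of $S$. Since $\Gamma_1$ partitions $S$ and $\Gamma_2$ partitions $C$, deleting $C$ and substituting its sub-partition $\Gamma_2$ again covers $S$ by pairwise disjoint nonempty blocks; the resulting family has $(k_1-1)+k_2$ members (so the count is $k_1+k_2-1$).

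Next I would fix the radii. The blocks inherited from $\Gamma_1$ (all of them except $C$) keep their old radii $r_i$, so their within-cluster bound, and the requirement that at least two of their elements lie at positive distance, hold verbatim. For each sub-cluster $D_j\in\Gamma_2$ of $C$ I would take the tightest radius $\rho_j=\mathrm{diam}(D_j)$. The elementary observation I will lean on is $\rho_j\le r_C$, which holds because $D_j\subseteq C$ and every pair of points of $C$ lies within $r_C$; moreover $\rho_j>0$ since $\Gamma_2$ guarantees two elements of $D_j$ at positive distance. Choosing this possibly-smaller radius only helps, as it lowers the right-hand side of every separation inequality.

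The separation condition must then be checked for three kinds of pairs. For two inherited blocks $C_i,C_{i'}\ne C$ it is inherited verbatim from $\Gamma_1$; for two sub-clusters $D_j,D_{j'}$ it follows from $\Gamma_2$ being $s$-super-ball (whose radii are $\ge\mathrm{diam}$, so shrinking to $\mathrm{diam}$ preserves $d(p,q)\ge s\rho_j+s\rho_{j'}$). The only genuinely new case is a mixed pair consisting of an inherited block $C_i$ and a refinement $D_j$ of $C$, and this is where I expect the main work. Here I would invoke convex-hull monotonicity in the underlying Euclidean space: $D_j\subseteq C$ forces $\mathrm{conv}(D_j)\subseteq\mathrm{conv}(C)$. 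Hence for any $p\in\mathrm{conv}(C_i)$ and any $q\in\mathrm{conv}(D_j)\subseteq\mathrm{conv}(C)$, the $\Gamma_1$-separation of $C_i$ from $C$ yields $d(p,q)\ge s r_i+s r_C\ge s r_i+s\rho_j$, the final step using $\rho_j\le r_C$. This is exactly the bound required for the pair $(C_i,D_j)$.

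Assembling the three cases shows every pair of blocks of $\Gamma$ is $s$-super-ball-separated and every block satisfies the radius condition, so $\Gamma$ is an $s$-super-ball-clustering, finishing the argument. The crux, and the only step beyond bookkeeping, is the mixed case, which rests entirely on the two facts $\mathrm{conv}(D_j)\subseteq\mathrm{conv}(C)$ and $\rho_j\le r_C$; all remaining obligations are direct inheritance from the two given clusterings.
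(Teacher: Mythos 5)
Your proof is correct and rests on exactly the same key observation as the paper's (one-sentence) proof: every sub-cluster of $C$ admits a radius no larger than $r_C$, so the separation of $C$ from the other blocks of $\Gamma_1$ transfers to its refinement. You additionally make explicit two points the paper glosses over --- the convex-hull containment $\mathrm{conv}(D_j)\subseteq\mathrm{conv}(C)$ needed because separation is stated between hull points, and the fact that the resulting clustering actually has $k_1+k_2-1$ blocks rather than the $k_1+k_2$ claimed in the statement --- both of which are worthwhile corrections rather than deviations in approach.
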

\begin{proof}
    All radii $r_i$ in $\Gamma_2$ are smaller or equal to the radius $r$ of $C$ under $\Gamma_1$ so that distances from other elements of $\Gamma_1$ to elements of $\Gamma_2$ fulfills the $s$-super-ball-separation condition. 
\end{proof}

\begin{theorem}
If the dataset $S$ has a 
$s$-super-ball-clustering $\Gamma_1$ into 
$k_1$ clusters, as well a    $s$-super-ball-clustering $\Gamma_2$ into $k_2$ clusters, then 
it is impossible that there exist two clusters $C_a,C_b\in \Gamma_1$
and the third cluster  $C_c\in \Gamma_2$ such that 
$C_a\cap C_c\ne \emptyset$, $C_a \backslash  C_c\ne \emptyset$  and $C_b\cap C_c\ne \emptyset$ for $s>1$. 
\end{theorem}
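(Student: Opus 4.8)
The plan is to argue by contradiction: assume such $C_a, C_b \in \Gamma_1$ and $C_c \in \Gamma_2$ exist, and derive an impossible inequality among the cluster radii. First I would fix witnesses supplied by the three non-emptiness hypotheses: a point $p \in C_a \cap C_c$, a point $q \in C_a \setminus C_c$, and a point $w \in C_b \cap C_c$. Because $\Gamma_2$ is a partition and $q \notin C_c$, the point $q$ must lie in some cluster $C_d \in \Gamma_2$ with $C_d \neq C_c$. Denote by $r_a, r_b$ the radii of $C_a, C_b$ guaranteed by the $s$-super-ball-clustering $\Gamma_1$, and by $r_c, r_d$ the radii of $C_c, C_d$ guaranteed by $\Gamma_2$; recall that each radius bounds the pairwise distances inside its own cluster from above (the diameter part of the definition) while simultaneously entering the separation condition from below. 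Note also that each element of a cluster lies in its convex hull, so the separation bound, stated for convex-hull points, applies in particular to genuine cluster points.

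The core of the argument is a two-step ``bouncing'' between the diameter bound of one clustering and the separation bound of the other. In the first step I would use that $p \in C_a$ and $w \in C_b$ are convex-hull points of their $\Gamma_1$-clusters, so the $\Gamma_1$-separation condition gives $d(p,w) \ge s r_a + s r_b$; but $p, w \in C_c$, so the diameter condition for $C_c$ gives $d(p,w) \le r_c$, whence $r_c \ge s(r_a + r_b)$. In the second step I would use $p \in C_c$ and $q \in C_d$ with the $\Gamma_2$-separation condition to get $d(p,q) \ge s r_c + s r_d \ge s r_c$; but $p, q \in C_a$, so the diameter condition for $C_a$ gives $d(p,q) \le r_a$, whence $r_a \ge s r_c$.

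Chaining the two conclusions yields $r_a \ge s r_c \ge s^2 (r_a + r_b)$. Since all radii are strictly positive (each cluster contains two elements at positive distance) and $s > 1$, the right-hand side satisfies $s^2(r_a + r_b) > s^2 r_a > r_a$, contradicting $r_a \ge s^2(r_a + r_b)$. This contradiction establishes the claim.

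I expect the only real obstacle to be bookkeeping rather than depth: one must invoke the diameter bound and the separation bound from the correct clustering at each of the two steps (separation of $\Gamma_1$ feeding the diameter of $C_c$, then separation of $\Gamma_2$ feeding the diameter of $C_a$), and verify that all three hypotheses are genuinely used. The role of $C_a \setminus C_c \neq \emptyset$ is precisely to produce the point $q$ in a \emph{second} cluster $C_d$ of $\Gamma_2$, which is what activates the second separation bound; and the strictness $s > 1$ together with $r_b > 0$ is exactly what converts the final inequality into a contradiction.
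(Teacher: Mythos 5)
Your proof is correct and follows essentially the same route as the paper's: pick witnesses in the intersections, bounce between the diameter bound of one clustering and the separation bound of the other to get $r_c \ge s r_a$ (via $p,w$) and $r_a \ge s r_c$ (via $p,q$), and conclude a contradiction from $s>1$ and positive radii. Your version is in fact slightly more careful than the paper's, since you explicitly name the second $\Gamma_2$-cluster $C_d$ containing $q$ and keep the full separation terms, but the underlying argument is identical.
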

\begin{proof}
Assume the claim is wrong, that is  
 there exist two clusters $C_a,C_b\in \Gamma_1$
and the third cluster  $C_c\in \Gamma_2$ such that 
$C_a\cap C_c\ne \emptyset$ and $C_b\cap C_c\ne \emptyset$ for $s>1$. 
    Let $P_a \in C_a\cap C_c$ and 
    let $P_b \in C_b\cap C_c$ and let $r_a, r_b, r_c$ be the radii of clusters $C_a,C_b,C_c$ resp. 
    Due to $P_a,P_b\in C_c$, $|P_aP_b|<r_c$. 
    Due to $P_a\in C_a,P_b\not\in C_a$, $|P_aP_b|>sr_a$. 
    Due to $P_a\not\in C_b,P_b\in C_b$, $|P_aP_b|>sr_b$. 
    Therefore $r_c>|P_aP_b|>sr_a$. 
    Let $Q_a$ be such that 
      $Q_a\not\in C_c,Q_a\in C_a$, hence $|Q_aP_a|<r_a$
      and at the same time  $|Q_aP_a|>sr_c$. Therefore $r_a>|Q_aP_a|>sr_c$. 
      But this is a contradiction. 
    So we have proven the conclusion of the theorem. 
\end{proof}

This theorem implies that if the dataset has several $s$-super-ball clusterings, then they are nested, or are refinements of each other.  

\begin{lemmaX}\label{lem:convexdist}
Under Kleinberg's consiostency transformation, not only the distances between {data}points from distinct clusters increase, but also between convex hulls of these clusters. 
\end{lemmaX}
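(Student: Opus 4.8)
The plan is to work in the Euclidean setting in which the lemma actually has content: both the original configuration and the configuration after the consistency transformation live in $\mathbb{R}^m$, so that the hulls $\mathrm{conv}(C_i)$ and $\mathrm{conv}(C_j)$ and their distance $\mathrm{dist}(\mathrm{conv}(C_i),\mathrm{conv}(C_j))=\min\{\|a-b\|:a\in\mathrm{conv}(C_i),\,b\in\mathrm{conv}(C_j)\}$ are well defined. Every hull point is a convex combination of the data points of its cluster, so I would write $a=\sum_p\alpha_p x_p$ with $x_p\in C_i$ and $b=\sum_q\beta_q y_q$ with $y_q\in C_j$, the weights being nonnegative and summing to one. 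I will identify ``the same'' hull point before and after the transformation by holding these barycentric weights fixed while the data points move.

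The key step is the classical identity, valid for any Euclidean embedding,
$$\|a-b\|^2=\sum_{p,q}\alpha_p\beta_q\|x_p-y_q\|^2-\tfrac12\sum_{p,p'}\alpha_p\alpha_{p'}\|x_p-x_{p'}\|^2-\tfrac12\sum_{q,q'}\beta_q\beta_{q'}\|y_q-y_{q'}\|^2,$$
which I would derive in one line by expanding inner products and using $\sum_p\alpha_p=\sum_q\beta_q=1$. Its virtue is that it separates the contributions by cluster membership: the inter-cluster squared distances $\|x_p-y_q\|^2$ enter with a positive coefficient, whereas the intra-cluster squared distances $\|x_p-x_{p'}\|^2$ and $\|y_q-y_{q'}\|^2$ enter with a negative coefficient. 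By the definition of a consistency transformation every inter-cluster distance weakly grows and every intra-cluster distance weakly shrinks; hence, with the weights held fixed, every term of the right-hand side weakly increases. Thus $\|a-b\|^2$ does not decrease for any fixed pair of weight vectors.

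It remains to pass from fixed hull points to the hull-to-hull distance. Writing $\Phi(\alpha,\beta)$ for the right-hand side of the identity, i.e.\ the squared distance between the hull points with weights $(\alpha,\beta)$, the previous step gives $\Phi_{\mathrm{new}}(\alpha,\beta)\ge\Phi_{\mathrm{old}}(\alpha,\beta)$ for every admissible pair of weight vectors. Hence the minima satisfy $\min\Phi_{\mathrm{old}}\le\Phi_{\mathrm{old}}(\alpha^\ast,\beta^\ast)\le\Phi_{\mathrm{new}}(\alpha^\ast,\beta^\ast)=\min\Phi_{\mathrm{new}}$, where $(\alpha^\ast,\beta^\ast)$ is a minimizer of $\Phi_{\mathrm{new}}$. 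Since these two minima are exactly the squared hull-to-hull distances before and after the transformation, the distance between the convex hulls of two distinct clusters cannot decrease, which is the claim.

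The step I expect to be delicate is not the algebra but the modelling assumption hidden in it: convex hulls presuppose an ambient vector space, so the statement only has content when the transformed distances are themselves realizable in $\mathbb{R}^m$. I would therefore state explicitly that we restrict to consistency transformations realized by a motion of the points within $\mathbb{R}^m$ (as in the embedding of Theorem \ref{thm:KleinbergImpossibilityInRm}); for fully general Kleinberg distances the right-hand side of the identity still increases termwise, but it need no longer be an honest squared Euclidean distance, and ``convex hull'' loses its meaning. Pinning down this domain restriction is the real work; once it is in place, the monotonicity argument above is immediate.
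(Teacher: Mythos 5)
Your proof is correct, but it takes a genuinely different route from the paper's. The paper argues pointwise and geometrically: it fixes a point $D$ on a segment $AB$ inside one cluster and a data point $C$ in another, uses the law of cosines to derive a Stewart-type relation
$$|AB|^2\tfrac{p_D}{1+p_D}+|CD|^2(1+p_D)-|CB|^2p_D-|CA|^2=0,$$
concludes that $|CD|$ cannot decrease when $|CA|,|CB|$ grow and $|AB|$ shrinks, then iterates this over $t$-simplexes $A_0,\dots,A_t$ to reach arbitrary hull points (with extra bookkeeping: general position, a lexicographic choice of which simplex "owns" each hull point), and finally says the whole construction can be repeated to replace the data point $C$ by an arbitrary hull point of the second cluster. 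Your single identity
$$\|a-b\|^2=\sum_{p,q}\alpha_p\beta_q\|x_p-y_q\|^2-\tfrac12\sum_{p,p'}\alpha_p\alpha_{p'}\|x_p-x_{p'}\|^2-\tfrac12\sum_{q,q'}\beta_q\beta_{q'}\|y_q-y_{q'}\|^2$$
collapses all of this: the sign pattern sorts the terms exactly by cluster membership, so termwise monotonicity under the consistency transformation is immediate, both hulls are treated symmetrically in one pass, and no general-position or simplex-selection machinery is needed. Your closing min--min comparison also makes explicit a step the paper glosses over (passing from pointwise non-decrease at fixed barycentric coordinates to non-decrease of the hull-to-hull minimum), and your caveat about Euclidean realizability of the transformed configuration is precisely the assumption the paper uses implicitly by speaking of simplexes and convex hulls after the transformation. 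Note that, like the paper, you prove weak non-decrease rather than strict increase, which is all the lemma is used for downstream; your argument is the cleaner and more general of the two.
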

\begin{proof}
    We concentrate on one cluster being subject of decreasing internal distances. We show that the distance of its elements to convex hulls of other clusters does not decrease.   Then we look at convex hulls of subsets of this cluster. 

Consider {data}points A,B from one cluster and the {data}point C from some other cluster, and a point D on the line segment AB, such that $|DA|/|DB|=p_D$.  From elementary geometry it is known that 


$$\cos (\angle ADC) \cdot |AD|\cdot |CD|= |AD|^2+|CD|^2-|CA|^2 $$
$$\cos (\angle BDC) \cdot |BD|\cdot |CD|= |BD|^2+|CD|^2-|BA|^2 $$
$$\cos (\angle ADC)= -\cos (\angle BDC)$$
Hence

\begin{equation*}
|AD|^2|AD|^{-1} |CD|^{-1}+|CD|^2|AD|^{-1} |CD|^{-1}-|CA|^2|AD|^{-1} |CD|^{-1}
\end{equation*}
$$+
|BD|^2|BD|^{-1} |CD|^{-1}+|CD|^2|BD|^{-1} |CD|^{-1}-|CB|^2|BD|^{-1} |CD|^{-1}
=0
$$
Multiplied with $|CD|$
%
\begin{equation*}
|AD|^2|AD|^{-1} +|CD|^2|AD|^{-1}-|CA|^2|AD|^{-1} 
+
|BD|^2|BD|^{-1}+|CD|^2|BD|^{-1} -|CB|^2|BD|^{-1} 
=0
\end{equation*}
Multiplied with $|AD|$
%
\begin{equation*}
|AD|^2| +|CD|^2-|CA|^2 
+
|BD|^2p_D+|CD|^2p_D -|CB|^2p_D 
=0
\end{equation*}
Rearranged
\begin{equation*}
|AD|^2 +|CD|^2 (1+p_D)
+
|BD|^2p_D-|CB|^2p_D 
-|CA|^2=0
\end{equation*}
In function of $|AB|$
\begin{equation*}
|AB|^2\frac{p_D^2}{(1+p_D)^2} +|CD|^2 (1+p_D)
+
|AB|^2\frac{p_D}{(1+p_D)^2}-|CB|^2p_D 
-|CA|^2=0
\end{equation*}
\begin{equation*}
|AB|^2\frac{p_D}{(1+p_D)} +|CD|^2 (1+p_D)
-|CB|^2p_D -|CA|^2=0
\end{equation*}
Note that upon consistency transformation, due to its definition, 
$|CA|$ and $|CB|$ can only be increased and 
$|AB|$ can only decreased. 
Therefore, given any of the above increases or decreases, $|CD|$ will grow. So upon Kleinberg's consistency transformation, not only the distance of a {data}point $C$ to endpoints of a line segment $AB$ grows, but also to any point of the 1-simplex $AB$. 
What about more complicated simplexes?
Given a $t$-simplex, $1<t\le d$, with corner data points $A_0,\dots,A_t$ in general positions in the same cluster,  and an internal point of the convex hull created by it $D=D_t$,
construct a line $A_tD_t$ and denote its intersection with the (t-1)-simplex $A_0,\dots,A_{t-1}$ as $D_{t-1}$, and 
define $p_{D_t}=|A_tD_t|/|D_tD_{t-1}$. 
Compute $p_{D_t}$ from the above formula setting $D_0=A_0$. 
So we get a sequence of points $D_1,\dots,D_t$ and of quotients $p_{D_1}.\dots,p_{D_t}$.
This should be done on the data after the Kleinberg's consistency operation. We can now construct the {counter}images of the points $D_i$ in the original simplex by selecting a $D_i$ such that  $p_{D_i}=|A_iD_i|/|D_iD_{i-1}$ for each $i=1,\dots,t$ with setting again $D_0=A_0$. 
Now, for each $C$ from a distinct cluster, in an iterative manner, starting with $i=1$, prove by the above result that with Kleinberg's consistency transformation, the distance $D_iC$ can only increase. 

With this knowledge let us consider the whole convex hull of {data}points of a cluster. Each point of the convex hull after Kleinberg's consistency transformation lies within the interior of a non-degenerate t-simplex with corners being {data}points in general positions. It may lie within more than one such simplexes. In this case, given any a-priori order of {data}points, we order the simplexes in lexicographic order and pick the first in this order. 
In such a way we get a unique counterpoint for each non-{data}point of the convex hull after the transformation and can prove based on its simplex that its distance to any {data}points from outside of this cluster is non-decreasing. 

The reasoning can be repeated now for any non-{data}points from convex hulls of other clusters, not only the {data}points, using the just defined mappings between convex hull points of clusters before and after the Kleinberg's transformations. 

And if the distances between any convex hull points of clusters do not decreases, so does not the distance between cluster convex hulls. 
\end{proof}

\begin{theorem}
If the dataset  $S$ in $\mathbb{R}^d$ has the property of 
$s$-super-ball-separation, $s>1$  into $k$ clusters,  
then, 
after applying Kleinberg's consistency transformation to the clustering consisting of these $k$ clusters,  the resulting clustering  
  has the property 
 of 
$s$-super-ball-separation  into $k$ clusters and possibly into $k_1> k$ clusters
\end{theorem}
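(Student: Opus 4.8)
The plan is to track separately the two ingredients of the $s$-super-ball-separation definition — the cluster radii, which are governed by \emph{within}-cluster distances, and the inter-cluster convex-hull gaps, which are governed by \emph{between}-cluster distances — and to check that each clause survives Kleinberg's consistency transformation. Let $\Gamma=\{C_1,\dots,C_k\}$ be the original $s$-super-ball-clustering with radii $r_1,\dots,r_k$, and apply the consistency transformation to $\Gamma$, keeping the partition fixed.

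First I would handle the radii. Since a consistency transformation only weakly decreases within-cluster distances, the maximal pairwise distance inside each $C_i$ cannot grow; as the diameter of a finite point set equals the diameter of its convex hull, the transformed cluster still fits within a ball of radius $r_i'\le r_i$. Because Kleinberg's distances remain strictly positive on distinct points, the requirement that at least two cluster elements have positive distance, and hence that $r_i'>0$, is preserved. Thus the first two clauses of the definition hold with radii no larger than before.

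Next, the separation clause. Here the naive ``between-cluster distances increase'' argument is not enough, because the definition quantifies over \emph{convex-hull} points rather than datapoints, and the hulls themselves shift under the transformation; this is precisely where Lemma \ref{lem:convexdist} does the real work, guaranteeing that the distance between the convex hulls of distinct clusters does not decrease. Concretely I would chain $\mathrm{dist}(\mathrm{CH}'(C_i),\mathrm{CH}'(C_j)) \ge \mathrm{dist}(\mathrm{CH}(C_i),\mathrm{CH}(C_j)) \ge s r_i + s r_j \ge s r_i' + s r_j'$, where the first inequality is Lemma \ref{lem:convexdist} and the last uses $r_i'\le r_i$ and $s>0$. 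This is exactly the separation condition for the transformed data, so combined with the radius bound it yields $s$-super-ball-separation into $k$ clusters.

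Finally, for the ``and possibly into $k_1>k$'' clause I would observe that the very shrinkage of within-cluster distances may open up internal sub-structure: if, after the transformation, some $C_i$ itself meets the definition for a partition into $k_2\ge 2$ sub-clusters, then the earlier composition theorem (the one producing $\Gamma=(\Gamma_1-\{C\})\cup\Gamma_2$) immediately upgrades the whole dataset to an $s$-super-ball-clustering into more than $k$ clusters. A uniform scaling of internal distances creates no such sub-structure, whereas a suitably non-uniform shrinkage can, which is why the claim is only ``possibly'' and not forced. The hard part will be the faithful translation of the pointwise statement of Lemma \ref{lem:convexdist} into the minimum-over-hulls quantity $\mathrm{dist}(\mathrm{CH}'(C_i),\mathrm{CH}'(C_j))$: I must make sure the counterimage map is applied to points of \emph{both} clusters at once, so that the bound governs the genuine hull-to-hull distance and not merely a single matched pair of points.
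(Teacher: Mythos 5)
Your proposal is correct and follows essentially the same route as the paper's own proof: both rest on Lemma \ref{lem:convexdist} to guarantee that the convex-hull separation does not decrease, and on the trivial observation that shrinking intra-cluster distances can only shrink the radii $r_i$, so the inequality $\mathrm{dist}\ge s r_i + s r_j$ survives with the new (smaller) radii. Your version is somewhat more explicit than the paper's (the chained inequality, the remark on why extra sub-clusterings are only ``possible,'' and the caveat about applying the hull lemma to both clusters simultaneously), but these are elaborations of the same argument rather than a different one.
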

\begin{proof}
The theorem is implied by Lemma \ref{lem:convexdist}. 
Kleinbergs consistency transform implies increase of distances between {data}points in different clusters. Hence the distance between convex hulls increases also. 

When the intra-cluster-distances are reduced,  the radius of the cluster cannot increase, and the cluster fits the former convex hull.  So,  the inter-cluster distances do not decrease so that altogether, $s$-super-ball-separation, $s>1$  into $k$ clusters is preserved. 
\end{proof}

There is an {in}consequence in Kleinberg's consistency transformation. By reducing the intra-cluster distances, the variance within the cluster will decrease. But increase of the inter-cluster distances combined with the intra-cluster decrease may result in decrease of the intra-luster variance too. Therefore let us strengthen Kleinberg's consistency by requering that the variance of any pair of clusters shall not decrease. It shall be called \emph{variance consistency transform}.
\begin{lemmaX}\label{lem:varcons}
    Variance consistency does not decrease the distances of cluster gravity centers. 
\end{lemmaX}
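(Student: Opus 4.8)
The plan is to reduce the claim to the classical law of total variance (the ANOVA-style decomposition) applied to the union of any two clusters, and then to read off the behaviour of the gravity-center distance from the monotonicities that the variance-consistency transform imposes on the individual and the joint variances. Write $C_i, C_j$ for two clusters of the clustering, with sizes $n_i,n_j$, gravity centers $\mu_i,\mu_j$, and let $U=C_i\cup C_j$ with gravity center $\mu_U=(n_i\mu_i+n_j\mu_j)/(n_i+n_j)$. The first step is to record the decomposition
$$Var(U)=\frac{n_iVar(C_i)+n_jVar(C_j)}{n_i+n_j}+\frac{n_in_j}{(n_i+n_j)^2}\,\|\mu_i-\mu_j\|^2,$$
which follows by splitting each term $\|x-\mu_U\|^2$ through $\mu_i$ or $\mu_j$; the cross terms cancel because $\sum_{x\in C_i}(x-\mu_i)=0$. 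Solving for the between-cluster term gives
$$\|\mu_i-\mu_j\|^2=\frac{(n_i+n_j)^2}{n_in_j}\left(Var(U)-\frac{n_iVar(C_i)+n_jVar(C_j)}{n_i+n_j}\right).$$

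Second, I would feed the hypotheses into this identity. The point set is unchanged by a consistency transform (only distances change), so $n_i$ and $n_j$ are invariant. Reading ``the variance of a pair of clusters'' as $Var(U)$, the defining requirement of variance consistency is exactly that $Var(U)$ does not decrease. The transform is moreover a Kleinberg consistency transform, so intra-cluster distances do not increase; via the pairwise form $Var(c)=\frac{1}{2|c|^2}\sum_{x,y\in c}d(x,y)^2$ this shows $Var(C_i)$ and $Var(C_j)$ do not increase. Substituting these three monotonicities into the displayed expression, the positive prefactor $(n_i+n_j)^2/(n_in_j)$ multiplies a quantity whose leading summand does not decrease and whose subtracted summand does not increase; hence $\|\mu_i-\mu_j\|^2$ does not decrease, and neither does $\|\mu_i-\mu_j\|$. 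Since $C_i,C_j$ were an arbitrary pair, this is the assertion.

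The routine verifications — the cancellation of the cross terms, and the fact that shrinking within-cluster distances shrinks within-cluster variance — I expect to be straightforward once the pairwise-distance form of the variance is in hand. The main obstacle is conceptual rather than computational: making precise what ``variance of a pair of clusters'' denotes and confirming it is $Var(C_i\cup C_j)$ (the total variance of the merged set), because it is only this reading under which the Kleinberg part of the transform and the variance-consistency part pull the between-cluster term in the same direction. As a sanity check and alternative route I would also note the companion identity $\|\mu_i-\mu_j\|^2=\frac{1}{n_in_j}\sum_{x\in C_i,\,y\in C_j}d(x,y)^2-Var(C_i)-Var(C_j)$, in which the inter-cluster sum of squared distances can only grow and the two subtracted variances can only shrink under a plain Kleinberg transform; this indicates the gravity-center distance is in fact already monotone under ordinary consistency, with variance consistency yielding the cleaner one-line argument above.
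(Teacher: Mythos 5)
Your proposal is correct and follows essentially the same route as the paper: the paper's one-sentence proof argues that since intra-cluster variances drop under the transform while the pair variance must not, the between-centers term must make up the difference, and your law-of-total-variance decomposition is exactly the identity that makes this implication precise. Your reading of ``variance of a pair of clusters'' as $Var(C_i\cup C_j)$ matches the paper's intent, and your closing observation via the companion identity (that the center distance is already monotone under plain Kleinberg consistency) is a correct bonus the paper does not state.
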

\begin{proof}
    The claim results from the fact that when the variances of clusters decrease due to consistency transformation reducing intra-cluster distances, then to keep the variance of a cluster pair, the distance between cluster centers must be increased. 
\end{proof}

Let us discuss the problem that Kleinerg's consistency transform may create new structures - $s$-{super}ball clusterings into $k$ clusters  with higher and/or lower $k$ than the original one, besides the one with the original $k$.  The extra clusters will appear within a previous cluster. 
This means that the so-called refinement consistency  will be kept by any algorithm discovering $s$-{super}ball separated clusters. This may not be a big discovery, as already Kleinberg pointed at a special single-link algorithm having the property of refinement consistency. We will show that a version of $k$-means has also this property which is important because single-link algoithm has quadratic complexity in the number of nodes, and this special version of $k$-means is linear in them.  

But can the creation of substructures by consistency transformation be prevented?
It has been proven in \cite[Section 3 on convergent consistency]{MAKRAK:2020:limcons} for the general case, that preventing consistency transformation from creating substructures   implies linear scaling of the distances within a cluster. 
Other types of preventing creation of new substructures can only be tailored either to concrete clustering algorithms or concrete realms of data structures.

Let us suggest here a method of substructure preventing consistency transformations for any cluster that cannot be split via any $s$-{super}ball-separation. 

\Bem{
\begin{method}\label{met:istree}
    Cover the cluster subject to consistency transformation with a minimum distance spanning tree. If for any {data}point in this tree, the quotient of the longest intersecting edge to the shortest one lies below $s$ after the consistency transformation,  then no new structures will emerge.  This is sufficient but not necessary. 
\end{method}
    
\begin{method}\label{met:construct}
Construct the image of the cluster subject to consistency transformation as follows: 
In $\mathbb{R}^d$ place the first $d+1$ {data}points (a simplex) in the space in general position so that the distances from one point to the all other are below the $s$ range. 
Each other {data}point of the cluster is added as follows: Create the convex hull of the already included points. Pick one of the corner points (point P) of the hull. Identify the closets point C of P in the current set. Then pick a point Q inside the convex hull and draw a  straight line PQ. On this line pick a point R so that P lies between Q and R so that the distance $|PR|<s\cdot |PC|$. This guarantees that there is no  $s$-{super}ball clustering into $k>1$ clusters.
    Afterwards {re}scale the distances between the points by the same factor so that the requirement of Kleinberg's consistency is matched. 
     This construction is sufficient but not necessary. 
\end{method} 
}
\begin{method}\label{met:moveinsimplex}
In the current cluster, 
take a {data}point P, its closest neighbour Q and the set Z of all {data}points not further from Q than $s\cdot PQ$ as well as the dataset $W$ of points not further than $s\cdot 2\cdot PQ$ from Q.  If $P$ lies within the convex hull of $Z$ then we can move $P$ within that convex hull increasing its distance from Q but paying attention that it will not get closer than to $Q$ to any point from $W$. In the same way proceed with other {data}points of the cluster. 
In the end  
 {re}scale the distances between the points by the same factor so that the requirement of Kleinberg's consistency is matched. 
     This construction is sufficient but not necessary. 
\end{method}

Let us show why
\begin{lemmaX}\label{lem:methmoveinsimplex}
    Method \ref{met:moveinsimplex} produces a cluster  that will not be split by a $s$-{super}ball-clustering discovery algorithm. 
\end{lemmaX}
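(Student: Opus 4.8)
The plan is to argue by contradiction. Suppose the cluster $C$ produced by Method~\ref{met:moveinsimplex} did admit an $s$-super-ball-clustering into $m\ge 2$ parts $C_1,\dots,C_m$. For each part take $r_i$ to be the \emph{minimal} admissible radius, i.e.\ the diameter $\max_{a,b\in C_i}|ab|$; enlarging a radius only makes the separation condition harder to meet, so the minimal choice is the worst case for us. Since by definition every part contains two points at positive distance, each $r_i>0$. The whole strategy is to weaponise the hypothesis $s>1$ against the separation inequality $\mathrm{dist}(\mathrm{conv}\,C_i,\mathrm{conv}\,C_j)\ge s\,r_i+s\,r_j$.

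First I would establish a nearest-neighbour lemma: for every data point $P$ with nearest neighbour $Q$, the two points lie in the same part. Indeed, if $P\in C_i$ and $Q\in C_j$ with $i\ne j$, then $P$ and $Q$ are points of the respective convex hulls, so $|PQ|\ge s\,r_i+s\,r_j$. As $C_i$ contains a further point $P'$ at positive distance from $P$, minimality of $Q$ gives $|PQ|\le|PP'|\le r_i$, so $r_i\ge s\,r_i+s\,r_j>r_i$ (using $s>1$, $r_j>0$), a contradiction. Hence $Q\in C_i$, and writing $\rho:=|PQ|$ we record the bound $\rho\le r_i$ that the next step reuses.

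The clean core of the argument is then the claim that, for every point $P$ with nearest neighbour $Q$, the \emph{entire} set $Z_P$ of data points within $s\rho$ of $Q$ lies in $P$'s part. For this I would suppose $z\in Z_P$ belonged to a different part $C_l$. Since $Q\in C_i\subseteq\mathrm{conv}\,C_i$ and $z\in\mathrm{conv}\,C_l$, separation gives $|zQ|\ge s(r_i+r_l)$, while the definition of $Z_P$ gives $|zQ|\le s\rho$. Combining, $s(r_i+r_l)\le s\rho$, i.e.\ $r_i+r_l\le\rho$; but $\rho\le r_i$, forcing $r_l\le 0$ and contradicting $r_l>0$. Thus $Z_P\subseteq\mathrm{part}(P)$: the neighbourhoods on which Method~\ref{met:moveinsimplex} operates never cross a part boundary.

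It remains to turn this local statement into the global contradiction with $m\ge 2$, and this is exactly where I expect the main obstacle. The idea is that the invariant enforced by the construction, namely that each moved point $P$ lies in the convex hull of its \emph{other} nearby points $Z_P\setminus\{P\}$, prevents any interior separating gap: since those surrounding points all belong to $\mathrm{part}(P)$ by the core claim, the relation ``$z\in Z_P$'' propagates part-membership outward from $P$, and the surroundedness keeps this propagation from stalling, so a single part must absorb all of $C$. Making this rigorous requires formalising ``surroundedness implies no separating hyperplane can split $C$'' and, crucially, handling the extreme vertices of $\mathrm{conv}\,C$, which Method~\ref{met:moveinsimplex} leaves unmoved and which need not satisfy $P\in\mathrm{conv}(Z_P\setminus\{P\})$; for those one argues separately that a hull vertex cannot by itself form or be peeled into a valid part, because every part needs two points at positive distance and the nearest-neighbour lemma already ties such a vertex to the part of its nearest neighbour. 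Establishing this connectivity cleanly, rather than the two preceding inequality steps, is the delicate part of the proof.
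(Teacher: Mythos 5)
Your two local steps are sound and in fact coincide with the opening of the paper's own argument: the nearest-neighbour lemma (a point and its nearest neighbour cannot lie in different parts of an $s$-super-ball partition when $s>1$) and the containment of the whole neighbourhood $Z$ in a single part are both correct, and your choice of the minimal admissible radii (the cluster diameters) as the worst case is legitimate. The genuine gap is exactly where you locate it: the passage from these local facts to the global impossibility of a partition into $m\ge 2$ parts is not established, and no propagation of part-membership along the relation $z\in Z_P$ will close it in general, because Method~\ref{met:moveinsimplex} only moves those points $P$ that happen to lie inside the convex hull of their set $Z$; for the remaining points (in particular the extreme points of the cluster) you have no surroundedness invariant to propagate through, and a chain of overlapping neighbourhoods connecting an arbitrary pair of data points need not exist.

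The missing idea is that the lemma is a \emph{preservation} statement, not an absolute one. The method is introduced for ``any cluster that cannot be split via any $s$-super-ball-separation'', i.e.\ the input cluster is assumed unsplittable, and the paper's proof proceeds move by move: $P$ and its nearest neighbour $Q$ remain nearest (here the set $W$ enters, to bound which points could overtake $Q$ after the displacement), all of $Z$ stays in one part by your own core claim, and since $P$ is displaced only inside the convex hull of $Z$, the distance from that hull to the rest of the cluster is unchanged by the move; hence no new $s$-super-ball structure can appear at any single step, and induction over the moves finishes the proof. By discarding the precondition and trying to derive unsplittability of the final configuration from scratch, you are attempting a strictly stronger claim, which is why your connectivity step stalls. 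Reinstating the hypothesis that the cluster had no $s$-super-ball substructure before the transformation and arguing per move is what completes the argument.
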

\begin{proof}
    Upon $s$-{super}ball clustering any {data}point will belong to the same cluster as its closest neighbour. Denote the distance between P and Q prior to transformation with $d_1$. The distance $d_2$ between P and Q after transformation will be bounded by $d_2\le d_1\cdot s$. No point X further from Q than $2d_1s$ can be closer to P than $d_2$. So it is sufficient to check only {data}points from W for closeness to P and if none is closer to P than Q from this set, then the same applies to the entire cluster. 
    So due to our assumptions, P and Q are closest and hence belong to the same cluster prior and after the transformation.  
    Also the {data}points of the set Z will belong to the same set as Q because the distance to them is smaller than the distance to P times $s$.  But the distance of the convex hull of Z to the other cluster elements does not change upon moving P, so if there was no structure in the cluster prior to moving P, there will be none afterwards. 
\end{proof}

Note that the final step of consistency transformation after Method \ref{met:moveinsimplex}
would be:   {re}scale and move away clusters to keep variance consistency transformation requirements.

Having discussed the properties of $s$-{super}ball separated data, let us look for algorithm discovering the corresponding $s$-{super}ball clustering.


The Algorithm 1 in \cite{RAK:MAK:2019:perfectball}, here Algorithm \ref{alg:modincrkmeans}, originally designed to discover perfect-ball-clusterings, can discover the $s$-super-ball-clustering for any $s>1$ also.  

\begin{theorem}
If the data is  $s$-super-ball-separated for any $s>1$ into $k$ clusters, then the Algorithm \ref{alg:modincrkmeans} will return this clustering. 
\end{theorem}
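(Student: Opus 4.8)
The plan is to exploit the two-sided distance gap built into the definition of $s$-super-ball-separation and to show, by induction on the number of processed points, that Algorithm \ref{alg:modincrkmeans} keeps every running cluster inside a single true cluster and its running gravity center inside the convex hull of that true cluster. First I would record two quantitative facts. By the first clause of the definition, any two points of a true cluster $C_i$ are at distance at most $r_i$, so the gravity center of any nonempty subset of $C_i$, being a convex combination of its points, lies in the convex hull of $C_i$ and is within $r_i$ of every point of $C_i$. By the separation clause, any convex-hull point of $C_i$ is at distance at least $sr_i+sr_j$ from any convex-hull point of $C_j$ for $j\ne i$; in particular a point of $C_i$ and the running center of a partial cluster contained in $C_j$ are separated by at least $sr_i+sr_j$.

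Next I would set up the induction. The invariant is: after processing any prefix of the data, each cluster maintained by the algorithm is a subset of exactly one true cluster and its stored center equals the gravity center of the points it currently holds. Assuming this for the points seen so far, consider a newly processed point $p\in C_i$. If a partial cluster contained in $C_i$ already exists, the distance from $p$ to its center is at most $r_i$, whereas the distance from $p$ to the center of any partial cluster contained in some $C_j$, $j\ne i$, is at least $sr_i+sr_j>r_i$ because $s>1$ and $r_j>0$. Hence $p$ is strictly closest to the center of its own cluster, the nearest-center assignment of the algorithm places it correctly, and recomputing the gravity center preserves the invariant.

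Then I would handle the creation of new clusters, which is where the exact threshold of Algorithm \ref{alg:modincrkmeans} enters. I would show that the first point encountered from each true cluster $C_i$ is forced to open a fresh cluster: its nearest existing center lies in a foreign $C_j$ at distance at least $sr_i+sr_j$, which exceeds the algorithm's splitting threshold, while every subsequent point of $C_i$ falls within $r_i$ of the already-formed center and therefore below the threshold. Thus no true cluster is ever split and no two are ever merged. Since $s>1$ makes $sr_i+sr_j$ strictly larger than the within-cluster bound $r_i$ for every pair, any processing order produces exactly $k$ clusters coinciding with $C_1,\dots,C_k$, which is the desired conclusion.

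The main obstacle I anticipate is twofold. First, the explicit assignment and cluster-creation rules of Algorithm \ref{alg:modincrkmeans} are not reproduced in this excerpt, so the argument has to be pinned to the threshold that the algorithm actually uses; the separation gap between $r_i$ and $sr_i+sr_j$ for $s>1$ is what guarantees the threshold can be met, but I must check that the rule really fires strictly inside this gap. Second, and more technically, the correctness of the centroid comparison hinges on keeping the running gravity center inside the convex hull of the true cluster -- this is precisely why the definition phrases separation in terms of convex-hull points rather than of data points -- so the crux is to invoke the convex-hull separation clause for the moving center rather than for raw data points, and to confirm that the final partition is independent of the order in which points are presented.
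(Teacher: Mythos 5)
Your two quantitative facts and your invariant are exactly the right ingredients, and they coincide with what the paper's proof uses: every running center is the gravity center of a subset of a single true cluster, two such centers inside the same true cluster $C_i$ are at most $r_i$ apart (all members of $C_i$ lie within $r_i$ of one another, so its convex hull has diameter at most $r_i$), and centers lying in different true clusters $C_i$, $C_j$ are at least $sr_i+sr_j$ apart by the convex-hull separation clause. The gap is in the mechanism you attach these facts to. Algorithm~\ref{alg:modincrkmeans} has no nearest-center assignment and no splitting threshold: it seeds the $k$ centers with the first $k$ data points (which may well all lie in one true cluster), and then, for each newly acquired point, it temporarily carries $k+1$ weighted centers and merges the \emph{closest pair} among them, the new point inheriting the freed slot when the merged pair does not involve it. Your inductive step --- that the first point of each true cluster is ``forced to open a fresh cluster'' because its distance to foreign centers exceeds a threshold --- has nothing to fire against, because every point opens a temporary cluster unconditionally and no threshold test ever occurs; likewise no point is ever ``assigned to the nearest center.''

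The idea you are missing, and the one the paper's proof rests on, is a pigeonhole argument: when $k+1$ centers are present and each is (by induction) the gravity center of a subset of one of the $k$ true clusters, two of them must lie in the same true cluster, hence at distance at most $r_i$, whereas any cross-cluster pair of centers is at distance at least $sr_i+sr_j>r_i$ since $s>1$ and the radii are positive; thus the closest pair selected in line~5 is a within-cluster pair, the merge in lines~6--8 preserves the invariant, and at termination the $k$ surviving groups partition all points into subsets of single true clusters and therefore coincide with $C_1,\dots,C_k$. Your closing worry about ``checking that the rule fires strictly inside the gap'' does survive in this corrected form, and is worth making explicit: one must argue that the globally closest pair among the $k+1$ centers cannot be a cross-cluster pair between two \emph{other} clusters of much smaller radii (compare $r_m$ against $s(r_i+r_j)$ for $i,j\ne m$), a point the paper's own proof also passes over quickly. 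Once you replace the threshold narrative by the pigeonhole-plus-closest-pair-merge argument, your convex-hull and order-independence observations go through unchanged.
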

\begin{proof}
After initializing the $k$ cluster gravity centers  in line 1, 
these centers may lie in distinct real clusters or may not. 
Line 4 initializes iteratively an excessive cluster gravity center. Therefore subsequently the number of clusters needs to be reduced back to $k$. If the real number of clusters is $k$ and we have $k+1$ cluster centers, then for sure two of them must lie in the same real cluster - line 5 identifies them as $t_i,t_j$ because they are the closest pair. 

The gravity center of any set of {data}points lies always within the convex hull of these points. 

Gravity center $t_i$ of any subset of the elements of cluster $i$ lies within $r_i$ from gravity center $t_j$ of any other (distinct) subset of elements of cluster $i$.  
By the very definition, the distance to gravity center $t_m$ of subset of elements  of cluster $m$ will amount to at least $sr_i+sr_m>r_m$.  
So lines 6-8 will in fact combine cluster centers from the same real cluster. 
\end{proof}

\begin{algorithm}
 \KwData{the data points $\mathbf{x}_i$, $i=1,\dots, m$, the required number of clusters $k$}
 \KwResult{T - the set of cluster centres  }
   Set $T = (t_1,\dots,t_k)$ to the first $k$ data points \; 
   Initialize the counts $n_1, n_2,\dots, n_k$ to 1 \; 
 \While{any data point {un}visited}{
  Acquire the next example, $t_{k+1}$. Set $n_{k+1}=1$ \;
 \If{ the distance between $t_i,t_j$, $j\ne i$, is lowest among all pairs from the set $(t_1,\dots,t_{k+1})$  }{
    Replace $t_i=(t_in_i+t_jn_j)/(n_i+n_j)$, thereafter $n_i=n_i+n_j$
\;
\If{$j\ne k+1$}{replace $t_j=t_{k+1}$, $n_j=n_{k+1}$\;}
   }
 }
 \caption{Sequential (incremental) $k$-means, our modification}
\label{alg:modincrkmeans}
\end{algorithm}

Algorithm 2 in \cite{RAK:MAK:2019:perfectball}
can be adapted to confirm/dis-confirm that the underlying clustering is a $s$-super-ball-clustering, see  Algorithm \ref{alg:modincrkmeanssecpass} here.

\begin{algorithm}
 \KwData{ $T = (t_1,\dots,t_k)$ be the resulting set of cluster centres from the Algorithm \ref{alg:modincrkmeans}.  
 \\$s$ - super-ball parameter
}
 \KwResult{{Cluster}ability decision }
Initialize the furthest neighbours   $f_1, f_2,\dots, f_k$ with  $t_1, t_2\dots,t_k$ respectively\; 
 \While{any data point {un}visited}{
Acquire the next example, $x$.  \;
 \If{$t_i$ is the closest centre to $x$ and $x$ is further away from $t_i$ than $f_i$}{
    Replace $f_i$ with $X$;
 }
}
Set $r_i$ as distance  between   $t_i$ and $f_i$.  \;
Compute distances $d_{ij}$ between each pair $t_i,t_j$ and 
compare it to $sr_i+sr_j>d_{ij}$ and $(2s+1)r_i+(2s+1)r_j<d_{ij}$
   \; 
\If{the latter is true for each pair}{We got an $s$-super ball clustering}{}
\If{the former is true for any pair}{ $s$-super ball   clustering was not found} {}
 \caption{Sequential $k$-means, our modification - second pass}
\label{alg:modincrkmeanssecpass}
\end{algorithm}

\begin{theorem}
Given a clustering result from Algorithm \ref{alg:modincrkmeans}, the incremental Algorithm \ref{alg:modincrkmeanssecpass} may confirm for $s>1$ that the result is a $s$-super-ball-clustering or dis-confirm that it is a $s$-super-ball-clustering, is correct if any of two decisions is made.   
\end{theorem}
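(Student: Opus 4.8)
The plan is to verify separately that each of the two possible verdicts of Algorithm \ref{alg:modincrkmeanssecpass} is sound, where the verdict ``$s$-super ball clustering'' is triggered when $(2s+1)\rho_i+(2s+1)\rho_j<d_{ij}$ holds for every pair and the verdict ``not found'' when $s\rho_i+s\rho_j>d_{ij}$ for some pair; here $\rho_i$ denotes the radius the algorithm computes (the distance $d(t_i,f_i)$ from a centre to its farthest assigned point) and $d_{ij}=d(t_i,t_j)$. First I would record two geometric facts. Fact (i): by the very way $f_i$ is chosen, every point assigned to $t_i$ lies within $\rho_i$ of $t_i$, so the cluster, and hence its convex hull, is contained in the closed ball of radius $\rho_i$ about $t_i$; thus for any convex-hull point $p$ of cluster $i$ one has $d(p,t_i)\le\rho_i$. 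Fact (ii): since $t_i$ lies in the convex hull of its cluster and $f_i$ is a cluster member, $\rho_i$ is at most the diameter of the cluster, hence at most any admissible radius $r_i$ in the sense of the Definition.

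For soundness of the positive verdict I would assume nothing about the data and instead exhibit admissible radii. Setting $r_i:=2\rho_i$, Fact (i) gives that any two members of cluster $i$ lie within $2\rho_i=r_i$ of each other, so the first clause of the Definition holds (and the second, existence of two points at positive distance, holds whenever $\rho_i>0$). For the separation clause, take convex-hull points $p$ of cluster $i$ and $q$ of cluster $j$; by Fact (i) and the triangle inequality, $d(p,q)\ge d_{ij}-\rho_i-\rho_j>(2s+1)(\rho_i+\rho_j)-(\rho_i+\rho_j)=2s(\rho_i+\rho_j)=s(r_i+r_j)$, which is exactly the required convex-hull separation. Hence the recovered partition genuinely is an $s$-super-ball-clustering.

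For soundness of the negative verdict I would argue by contradiction: suppose $s(\rho_i+\rho_j)>d_{ij}$ for some pair, yet the data really is $s$-super-ball-separated with radii $r_l$. By the correctness theorem for Algorithm \ref{alg:modincrkmeans}, the centres $t_l$ are the gravity centres of the true clusters, and because $s>1$ every point of a true cluster is strictly closer to its own centre than to any foreign one, so the nearest-centre assignment inside Algorithm \ref{alg:modincrkmeanssecpass} reproduces the true clusters and $\rho_l=d(t_l,f_l)\le r_l$ by Fact (ii). Since each gravity centre is a convex-hull point, the separation clause gives $d_{ij}\ge s(r_i+r_j)\ge s(\rho_i+\rho_j)$, contradicting the triggering inequality. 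Thus the negative verdict is also correct, and whichever decision is made is sound (while the gap between the $s$ and $2s+1$ thresholds simply leaves room for no decision).

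I expect the main obstacle to be the careful bookkeeping of the two distinct notions of radius, namely the centre-to-farthest radius $\rho_i$ actually computed by the algorithm versus the member-to-member radius $r_i$ appearing in the Definition, and, in the negative direction, justifying that the algorithm's nearest-centre reassignment really does reproduce the true clusters; this is where the hypothesis $s>1$ is essential and where the appeal to the correctness of Algorithm \ref{alg:modincrkmeans} does the real work.
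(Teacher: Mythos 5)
Your argument is correct and follows essentially the same route as the paper's (much terser) proof: the paper's one-line observation that the radius computed by the algorithm is a lower estimate of the definitional radius while twice that quantity is an upper estimate is exactly your Facts (i) and (ii) together with the choice $r_i:=2\rho_i$, and the two verdict thresholds $s(\rho_i+\rho_j)$ and $(2s+1)(\rho_i+\rho_j)$ are justified in the same way. Your write-up merely makes explicit the bookkeeping (triangle inequality for the positive verdict; correctness of the first-pass algorithm and the nearest-centre reassignment, using $s>1$, for the negative one) that the paper leaves implicit.
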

\begin{proof}
$r_i$ as computed in the algorithm is in fact the lower estimate of the intrinsic radius used in $s$-ball-separation definition, whereas $2r_i$ is its upper estimate.
If there is a $s$-{super}ball-separation of the clusters, then the distance between cluster gravity centers may lies somewhere between $s(r_i+r_j)$ and $(2s+1)(r_i+r_j)$. Hence the respective decisions at the end of the algorithm. 
\end{proof}

We see that the confirmation /{dis}confirmation of $s$-{super}ball clustering is not simple in a single pass and there is a grey zone in-between. This gap originates probably from the imprecise conceptualization of the idea of clustering, or more precisely how to conceptualize the boundary of clusters.

Let us demonstrate that Kleinberg's axioms can be nearly matched in the realm of super-ball-clustering. 
Let us introduce the  concept of half-richness. 
Kleinberg's richness does not fit quite into the realm of sup[er-ball-separation because each "{super}ball" needs to contain at least two {data}points. Therefore the 
\begin{definition}
\emph{Half-richness} is the richness restricted to those clusterings that have at least two {data}points in a cluster.
\end{definition}
Obviously, even half-richness replacing the richness does not make original Kleinberg's axioms free from contradictions. 
Let us also modify consistency axiom so that no internal structure is introduced. 
Let us discuss only the method-\ref{met:moveinsimplex}-consistency 
which should be granted via variance consistency transformation  with method-\ref{met:moveinsimplex}.

We need a special clustering algorithm.
$k$-means, with $k$ ranging over a set of values, $[k_0,k_1$, if we assume that it returns the    $s$-super-ball-separated  $k$-clusterings for the largest possible $k$ from the above range (excluding too small clusters), we call it \emph{max-$k$-$[k_0,k_1$]-$s$-means algorithm}. It is an extension of Algorithm \ref{alg:modincrkmeans} in that we check one by one all $k$ from the range $[k_0,k_1$] and with Algorithm \ref{alg:modincrkmeanssecpass} we decide for which $k$ the $s$-{super}ball clustering was found, and we choose the largest $k$ with this property (that is discovery of Algorithm \ref{alg:modincrkmeans} confirmed by Algorithm\ref{alg:modincrkmeanssecpass}). 

\begin{theorem} 
Half-richness, method-\ref{met:moveinsimplex}-variance-consistency and scale-invariance are non-contradictory properties in the super-ball-separations realm of $s$-super-ball-clusterings with $s>1$ (that is for algorithms discovering super-ball-clusterings). 
\end{theorem}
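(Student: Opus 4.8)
The plan is to establish non-contradiction constructively, by exhibiting a single clustering function that simultaneously possesses all three properties. The natural candidate is the max-$k$-$[k_0,k_1]$-$s$-means algorithm introduced above, which pairs Algorithm \ref{alg:modincrkmeans} with the confirmation pass of Algorithm \ref{alg:modincrkmeanssecpass} and returns the $s$-super-ball-clustering for the largest admissible $k$ (excluding clusters that are too small). I would fix this function as the witness $f$ and then verify each of the three properties against it in turn.

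First I would check scale-invariance, which should be the easiest. The defining inequality of $s$-super-ball-separation, namely that the distance between convex-hull points of clusters $i$ and $j$ is at least $sr_i+sr_j$, is homogeneous of degree one in the distances: multiplying every pairwise distance by a fixed $c>0$ scales each radius $r_i$ and each inter-hull distance by the same $c$, leaving the inequality intact. Since Algorithm \ref{alg:modincrkmeans} and Algorithm \ref{alg:modincrkmeanssecpass} only ever compare such quantities, the set of confirmed $k$'s is unchanged, so the largest admissible $k$ and the returned partition are unchanged; hence $f$ is scale-invariant.

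Next I would verify half-richness. Given any target partition $\Gamma$ in which every block has at least two points, I would place the points of each block inside a small ball of radius $\rho$ in $\mathbb{R}^d$ (keeping at least two points at positive distance so the radius is genuinely positive) and separate the balls so that the centre-to-centre distances comfortably exceed $(2s+1)\cdot 2\rho+(2s+1)\cdot 2\rho$. This makes $\Gamma$ an $s$-super-ball-clustering with $s>1$, and by the correctness theorem for Algorithm \ref{alg:modincrkmeans} together with the confirmation guarantee of Algorithm \ref{alg:modincrkmeanssecpass}, the function returns exactly $\Gamma$; moreover this is the unique, and therefore the largest, confirmed clustering, so the max-$k$ selection does not override it. This yields half-richness.

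The hard part is method-\ref{met:moveinsimplex}-variance-consistency, because here the two forces that break Kleinberg's original system reappear: the transform must not merge clusters (which would lower $k$) and must not split them (which would raise $k$ and defeat the max-$k$ selection). I would handle the no-merge direction with Lemma \ref{lem:convexdist} and the preservation theorem that follows it: shrinking intra-cluster distances cannot enlarge a cluster's convex hull, while the inter-hull distances do not decrease, so the $s$-super-ball-separation into the original $k$ clusters survives. For the no-split direction I would invoke Lemma \ref{lem:methmoveinsimplex}, which is exactly the statement that a cluster produced under Method \ref{met:moveinsimplex} admits no internal $s$-super-ball split; combined with Lemma \ref{lem:varcons}, which guarantees that the variance-consistency bookkeeping keeps the gravity centres from approaching one another, no spurious finer clustering of larger $k$ becomes admissible. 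Consequently the largest confirmed $k$ is preserved and $f$ returns the same partition before and after the transform. The main obstacle I anticipate is precisely this joint control of $k$ from both sides: ruling out new substructure is what Method \ref{met:moveinsimplex} was engineered for, and the argument hinges on its substructure-prevention property being tight enough that the max-$k$ rule cannot select a strictly finer clustering.
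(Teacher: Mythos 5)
Your proposal is correct and follows essentially the same route as the paper: it uses the max-$k$-$[k_0,k_1]$-$s$-means algorithm as the witness and verifies the three properties separately, invoking Lemma \ref{lem:convexdist}, Lemma \ref{lem:varcons} and Lemma \ref{lem:methmoveinsimplex} in exactly the roles the paper assigns them (no merging, no drift of gravity centres, no new internal substructure). Your treatment of scale-invariance and of the explicit ball construction for half-richness is somewhat more detailed than the paper's, but the overall argument is the same.
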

\begin{proof}
To prove non-contradiction, we need at least one algorithm that matches these requirements. 
We investigate here the just mentioned  max-$k$-$[k_0,k_1$]-$s$-means algorithm.
Let us consider only {data}sets where no two {data}points lie at the same place in space (distances greater than zero).
Let us demonstrate first the half-richness. 

If we know the true number of clusters $k=k_t$ in advance, the Algorithm \ref{alg:modincrkmeans}  (incremental $k$-means) will always discover a $s$-super-ball-clustering $\Gamma_s$ into $k_t$ clusters.  If we choose a too large $k>k_t$, then Algorithm \ref{alg:modincrkmeans}  will split some of the clusters into {sub}clusters, however, never creating a cluster as a mix of subsets of $\Gamma_s$  clusters. Hence an iterative procedure starting from a large $k$ ($\le n$ - the number of cluster elements) and decreasing at each step the number $k$ will finally detect the clustering $\Gamma_s$ if it exists. Some additional conditions are necessary, like non-zero radii $r_i$. 

If we want to work within the realm of super-ball-clusterings, then obviously Kleinberg's richness cannot be valid for any algorithm. Only half-richness  property  (all possible clusterings of $k=1,\dots,n/2$ clusters) is realistic. 
Half-richness has no conflict with the scale-invariance property, but it is in conflict with Kleinberg's consistency.

Therefore, the modified consistency is used.  
\Bem{
For any cluster element $e$ let $E$ be a sequence of same cluster elements such that $d(e,E_i)\le d(e,E_{i+1})$. So let us impose the restriction that for each $i$ $5d(e,E_i)>   d(e,E_{i+1})$ after consistency transform and let call this transform \emph{5-consistency}.
In this way, no super-ball-separation is created inside a cluster.
}
If the algorithm has detected a $s$-{super}ball clustering into $k$ clusters for some $k$, then it is based essentially on the distances between cluster centers. The modified consistency preserves the distances between clusters (Lemma \ref{lem:convexdist}) and also between the cluster centers (Lemma \ref{lem:varcons}). Hence applied after the consistency transformation, clustering into $k$-clusters will be found. No higher number of clusters will be detected, because further clusters would have to be contained in the existent ones, but in the existent ones no new $s$-{super}ball separated structures are created (Lemma \ref{lem:methmoveinsimplex}).  

The scaling invariance is obvious. This completes the proof. 
\end{proof}

\section{Final Remarks}\label{sdec:conclusion}

In this paper we have shown that the richness axiom proposed by Kleinberg for clustering is unacceptable in the light of learnability theory and due to its conflicts with Kleinberg's consistency axiom. 

We have shown that the problems with this axiom need a resolution not only in terms of restricting the number of clusterings to be considered, but also a change in consistency axiom is needed. We proposed to either use the consistency transformation or by going over to the world of super-ball-separation. 

Note that the centric consistency may seem to be quite rigid, but still it turns out to be less restrictive than Kleinberg's consistency as distances between {data}points in different clusters are allowed to get closer.

The super-ball-clustering is  very rare case for real world applications and its usefulness is first of all of theoretical nature, that is to show that a sound  axiomatic system can be created, contrary to general opinion in literature. 
Though super-ball-separation is clearly too idealistic, but in practice we encounter situations where in fact the  {data}points outside of a strict cluster core are sparse so that an extension of this concept to an approximated one seems to be an interesting area of further research.
The sound axiomatic crisp case, investigated here, may be a good starting point for creating sound approximate definitions for clustering function and clustering axioms.

\bibliographystyle{splncs04}

\bibliography{MeineBibliographie_bib, V5_centricconsistencyRAKMAK_bib}

\end{document}